\documentclass{article} 
\usepackage{iclr2027_conference,times}

\usepackage{hyperref}
\usepackage{url}
\usepackage{amsmath,amssymb,amsthm}
\usepackage{booktabs}
\usepackage{graphicx}
\usepackage{multirow}
\usepackage{enumitem}
\usepackage[table]{xcolor}
\usepackage[normalem]{ulem}   

\graphicspath{{fig/}}

\theoremstyle{plain}
\newtheorem{proposition}{Proposition}
\newtheorem{corollary}{Corollary}
\theoremstyle{definition}
\newtheorem{assumption}{Assumption}
\newtheorem{definition}{Definition}
\newtheorem{construction}{Construction}
\theoremstyle{remark}
\newtheorem{remark}{Remark}

\newcommand{\Jobj}{J}
\newcommand{\Frag}{\Phi}
\newcommand{\Bnbhd}{\mathcal{B}}
\newcommand{\Bank}{S}
\newcommand{\Bankstar}{S^{*}}
\newcommand{\pol}{\pi_{\theta}}
\newcommand{\Exp}{\mathbb{E}}

\newcommand{\normcdf}{\Phi_{\mathcal{N}}}
\newcommand{\CREPE}{CODA}  

\newcommand{\SR}{\mathrm{SR}}
\newcommand{\CR}{\mathrm{CR}}   

\newcommand{\RR}{\mathrm{RR}}
\newcommand{\Banknone}{\varnothing}
\newcommand{\netben}{\Delta_{\mathrm{net}}}
\newcommand{\Drop}{\mathrm{Drop}}
\newcommand{\RP}{\mathrm{RP}}  

\definecolor{diffmark}{HTML}{0B6E6E}
\definecolor{codahighlight}{HTML}{E6E3F5}                 
\newcommand{\chg}[1]{\textcolor{diffmark}{\textbf{#1}}}   
\newcommand{\del}[1]{\textcolor{diffmark}{\sout{#1}}}     

\title{Evolving Together, Failing Apart:\\ The Skill--Policy Co-evolution Trap in Self-Improving Agents}

\author{Yujie Feng$^{1}$,\enspace Xiaobin Wang$^{1}$,\enspace Jinluan Yang$^{1}$,\enspace Xiaonan Shi$^{1}$ \\
\textbf{Pengjun Xie$^{1}$,}\enspace \textbf{Philip S. Yu$^{3}$,}\enspace \textbf{Xiaoming Wu$^{2}$,}\enspace \textbf{Jieping Ye$^{1}$}
\\
$^{1}$Alibaba Token Hub, Alibaba Group \quad
$^{2}$The Hong Kong Polytechnic University \\
$^{3}$University of Illinois at Chicago
}

\iclrfinalcopy 

\begin{document}

\maketitle
\lhead{Preprint. Under review.}

\begin{abstract}
Skill--policy co-evolution---in which an agent's policy model and its skill bank improve each other
during training---is a promising paradigm for self-improving agents. 
Current evaluations score the co-evolved policy on the final bank it trained with (the \emph{delivered bank}).
After deployment, however, skills may be rewritten, added or removed more frequently than the policy model is retrained, leaving the policy to operate on a drifted skill bank that neither the training objective nor existing evaluation protocols address.
In this paper, we first establish, through theoretical and empirical analysis, that the fragility hidden by this blind spot is not incidental.
The acceptance rule these methods share (\emph{cooperative acceptance}) ties a policy's competence to the bank it co-evolved with, leaving it fragile once skills drift, regardless of the implementation; 
empirically, sensitivity to skill drift grows as co-evolution training proceeds.
We therefore introduce the \textbf{CO}-evolution \textbf{D}rift \textbf{A}udit (\textbf{\CREPE{}}), a
deployment-oriented evaluation protocol that simulates skill drift and, alongside success rate under worst-case drift, introduces two new metrics: a \emph{consistency rate} under meaning-preserving edits and a \emph{resistance rate} to corrupted skills.
Across four co-evolution methods and five benchmarks (six settings), worst-case success is \textbf{5--13} points below the peak measured for each delivered pair, erasing \textbf{52--93\%} of the delivered bank's gain over the no-skill baseline. \CREPE{} thus reframes the target of co-evolution---from peak performance on the co-evolved bank to the ability to harness skills however they evolve---and provides a shared standard against which future methods can be audited.
\end{abstract}

\section{Introduction}
\label{sec:intro}

Building agents that improve themselves is a central goal of current agent research, and
co-evolving a skill library with a policy is one of the most promising routes~\citep{xia2026skillrl,he2026reskill}. The agent
solves tasks, distills the experience into reusable skills, and uses those skills to
strengthen its policy in turn, so competence compounds over rounds. A recent line of
methods~\citep{shi2026skill1,li2026arise,zhang2026co,yang2026skillmaster, lin2026auso}
follows exactly this recipe: it iterates the policy and the skill bank together during
training and reports large gains on agent benchmarks. 
Each ultimately delivers a pair---a co-evolved policy and its final skill bank (the \emph{delivered bank})---and reports the policy's task success on that bank.

However, current evaluation is disconnected from deployment: it assumes that the skill bank remains fixed after training. Real skill banks are non-stationary: skills are rewritten, entries are added or removed, and versions coexist, often changing faster than the policy is retrained. A deployed policy therefore typically runs on a bank that has drifted from the one it was delivered with, yet existing protocols do not test whether performance is maintained. 
A few works note this brittleness~\citep{liu2026rethinking,hebbar2026sia} but offer neither a systematic measurement nor a causal account.


We argue that the blind spot conceals a property of the training objective, not an incidental evaluation artifact. Co-evolution repeatedly adapts the policy and bank into a mutually specialized pair, while evaluation rewards exactly that pairing. The policy is therefore optimized for its delivered bank rather than for transferable skill use. We establish this in two ways. Theoretically, we prove that the shared acceptance rule---retaining a skill update only when it helps the current policy---places the delivered pair at a peak over candidate banks, so rewriting the delivered skills cannot improve the policy and usually degrades its performance (Section~\ref{sec:theory}). Empirically, using the proposed \emph{co-evolution fragility curve}, we show that sensitivity to skill drift increases as training proceeds in five of six settings (Section~\ref{sec:empirics}). We call this the \emph{co-evolution trap} (Figure~\ref{fig:teaser}): the process that raises peak performance also creates sensitivity to bank drift.

\begin{figure}[t]
\begin{center}
\includegraphics[width=\textwidth]{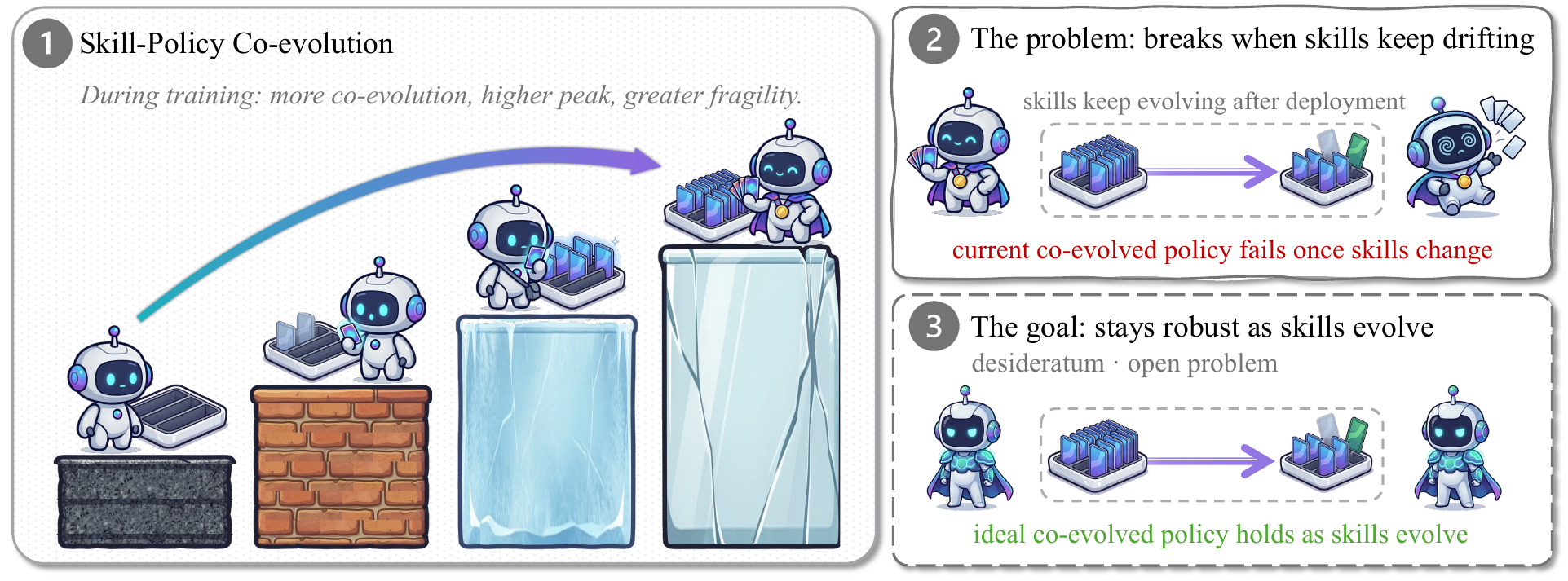}
\end{center}
\caption{\textbf{The skill--policy co-evolution trap (conceptual illustration)}. 
\textbf{(1)~Training}: 
The policy and skill bank co-evolve round by round,
and evaluation reports the peak reached by the resulting delivered pair.
\textbf{(2)~Deployment}: The skill bank continues to evolve while the policy remains fixed. Because the policy is specialized to its delivered bank, its performance degrades as the bank drifts.
\textbf{(3)~Goal}: An ideally co-evolved policy that retains its performance as skills
evolve.
}
\label{fig:teaser}
\end{figure}

Establishing the problem exposes a second gap: existing evaluations cannot measure it,
because they report only one success rate on the fixed delivered bank. We therefore
introduce the \textbf{CO}-evolution \textbf{D}rift \textbf{A}udit (\textbf{\CREPE{}}), a deployment-oriented protocol for
auditing post-deployment skill drift under controlled conditions. \CREPE{} supplements the
conventional peak success rate with worst-case success and two new metrics---a
\emph{consistency rate} under meaning-preserving edits and a \emph{resistance rate} to
corrupted skills. Across four methods and five benchmarks (six settings), worst-case success
is \textbf{5--13 points} below the peak measured for each delivered pair, erasing \textbf{52--93\%} of the delivered bank's gain over the no-skill baseline, so the reported peak overstates the competence a deployment can rely on. \CREPE{} thus reframes the target of
co-evolution---from achieving peak performance on the bank a method co-evolved with to using skills reliably as
they keep changing---and we release it as a common standard for auditing future methods.

Our main contributions are summarized as follows:

\begin{itemize}[leftmargin=*]
\item We \textbf{expose a trap in skill--policy co-evolution}: current evaluations measure fit to a fixed delivered bank, leaving performance under deployment-time bank change unmeasured.


\item We \textbf{theoretically and empirically demonstrate} its mechanism and signature:
under cooperative acceptance and candidate coverage, no covered rewrite improves the
delivered policy; empirically, 18/18 endpoint pairs obey this prediction and fragility grows along training in five of six settings.


\item We \textbf{propose a deployment-oriented evaluation protocol}, \CREPE{}, reporting worst-case success under tested content-level drift, consistency under meaning-preserving edits, and resistance to corrupted skills, making the deployment robustness of the delivered policy--bank pair measurable.

\end{itemize}

\section{Why Co-evolution Favors Specialization over Robustness}
\label{sec:theory}

\paragraph{Problem setting.}
\label{ssec:setting}
A task $x$ is drawn from a distribution $D$, a skill bank $\Bank$ lies in a space
$\mathcal{S}$, and a policy $\pol(\cdot \mid x, \Bank)$ maps both to a distribution over
action sequences, with skills entering as text in its context. Methods optimize and report
the expected return
\begin{equation}
\Jobj(\theta, \Bank) \;=\; \Exp_{x \sim D}\!\left[R\!\left(\pol(\cdot \mid x, \Bank)\right)\right],
\label{eq:objective}
\end{equation}
where $R$ is a binary success indicator, so $\Jobj$ is the reported success rate. We fold
skill selection into $\pol$, making $\Jobj$ a function of the bank as a whole.
Co-evolution alternates a \emph{policy step}, which improves $\theta$ against the current bank, and a \emph{bank step}, which proposes banks by rewriting, adding or removing skills and retains one through an acceptance test. 
After $T$ rounds, it delivers $(\theta^{*},\Bankstar)$ and reports $\Jobj(\theta^{*},\Bankstar)$ as its \emph{peak return}---the de facto objective for comparing these methods.

\subsection{A Unified View and the Premise It Reveals}
\label{ssec:axes}

Current methods differ in \emph{update cadence}, \emph{bank growth}, \emph{validation
gate} and \emph{injection mechanism} (Table~\ref{tab:framework}; definitions in
Appendix~\ref{app:framework}). Across these choices, they share two patterns. First, they
optimize and report peak return on the bank they co-evolved with. Second, their gates favor
banks that serve the current policy, whether through reward thresholds, sampled-return
comparisons or posteriors over competing versions. The first motivates the point-mass
diagnosis of Section~\ref{ssec:prop3}; the second motivates the idealized selection rule
below.

No method searches all of $\mathcal{S}$. Let $\mathcal{C}_t\subseteq\mathcal{S}$ denote
the candidate banks available at round $t$.

\begin{table}[t]
\caption{Current skill--policy co-evolution methods differ in implementation but share a
selection bias toward current-policy utility and report peak return on the co-evolved bank.}
\label{tab:framework}
\begin{center}
\scalebox{0.7}{
\begin{tabular}{@{}lllll|cc@{}}
\toprule
 & \multicolumn{4}{c}{\emph{Design choices} (vary across methods)} & \multicolumn{2}{c}{\textbf{Shared structure}} \\
\cmidrule(lr){2-5} \cmidrule(l){6-7}
Method & Cadence & Growth & Gate & Injection & Selection bias & Objective \\
\midrule
SkillRL~\citep{xia2026skillrl}       & alternating & monotone        & success-gated       & batch+top-$k$ & \multirow{8}{*}{\textbf{current utility}} & \multirow{8}{*}{\textbf{peak return}} \\
ReSkill~\citep{he2026reskill}       & near-sync   & capped ($\le$8) & Thompson samp.      & per-step      & & \\
SAPO~\citep{zhang2026co}   & near-sync   & two-tier, capped & marginal util.~$>$0 & rerank       & & \\
Skill1~\citep{shi2026skill1}       & joint       & monotone        & success ($r{=}1$)   & log-prob      & & \\
ARISE~\citep{li2026arise} & joint       & two-tier        & hier.\ reward       & log-prob      & & \\
SkillMaster~\citep{yang2026skillmaster} & near-sync & managed & policy scoring & policy-managed & & \\
Evolving-RL~\citep{fan2026evolving}            & joint       & monotone        & reward thresh.      & batch         & & \\
COSPLAY~\citep{wu2026co}                & near-sync   & managed (lifecycle) & contract verif. & per-step      & & \\
\bottomrule
\end{tabular}}
\end{center}
\end{table}

\begin{assumption}[Idealized cooperative acceptance]
\label{as:accept}
The retained bank maximizes the current policy's return over the round's candidate set,
\begin{equation}
\Bank_t \;\in\; \arg\max_{\Bank \in \mathcal{C}_t} \Jobj(\theta_t, \Bank).
\label{eq:accept}
\end{equation}
\end{assumption}
The rule is \emph{cooperative} because selection serves the current policy rather than
challenging it. Actual gates admit candidates that pass a non-decreasing utility test and
terminate after a finite budget; they need not solve Eq.~\eqref{eq:accept} exactly. We use
the equation as an explicit idealization of their shared selection bias, restricted to the
round's candidate set rather than all of $\mathcal{S}$.

\subsection{Perturbation Response and the Co-evolution Peak}
\label{ssec:fragility}
\label{ssec:prop1}

\begin{definition}[Content-level neighborhood]
\label{def:nbhd}
For a bank $\Bank$, let $\Bnbhd(\Bank)$ denote a specified family of banks obtained by
rewriting the textual content of skills already present while fixing the skill set and
retrieval-side fields. We include $\Bank$ itself and exclude the empty bank.
\end{definition}

\begin{assumption}[Candidate coverage at delivery]
\label{as:coverage}
The final candidate family covers the content-level neighborhood of the delivered bank,
\begin{equation}
\Bnbhd(\Bankstar) \;\subseteq\; \mathcal{C}_T.
\label{eq:nbhd-in-candidates}
\end{equation}
\end{assumption}

Assumption~\ref{as:coverage} makes explicit that the theory applies only to the covered
rewrite family, not to arbitrary post-deployment edits. The empty bank is excluded because
removal is an autonomy anchor rather than a content perturbation
(Section~\ref{ssec:collapse}). Fixing retrieval-side fields holds the selected skill set
constant, as enforced experimentally (Section~\ref{ssec:probe};
Appendix~\ref{app:masking}).

\begin{definition}[Perturbation response and fragility]
\label{def:frag}
The \emph{response set} of a policy at a bank collects the return changes induced by
content-level perturbation,
$\mathcal{R}(\theta, \Bank) = \big\{\Jobj(\theta, \Bank') - \Jobj(\theta, \Bank) : \Bank' \in \Bnbhd(\Bank)\big\}$,
and \emph{fragility} is its worst case in magnitude,
\begin{equation}
\Frag(\theta, \Bank) \;=\; -\min \mathcal{R}(\theta, \Bank) \;=\; \Jobj(\theta, \Bank) - \min_{\Bank' \in \Bnbhd(\Bank)} \Jobj(\theta, \Bank').
\label{eq:frag}
\end{equation}
\end{definition}

The content-level drops measured in Section~\ref{sec:empirics} (axes T1--T3) and the
peak-to-worst gap reported in Section~\ref{ssec:metrics} are empirical lower bounds on
$\Frag$: both evaluate finitely many variants from $\Bnbhd(\Bank)$, so they can only
understate its worst-case value, making our estimates conservative. Since $\Bank \in \Bnbhd(\Bank)$, $\Frag \ge 0$ for \emph{every}
policy, so magnitude alone cannot distinguish a co-evolved policy from any other; what
does is the \emph{sign structure} of $\mathcal{R}$, which Proposition~\ref{prop:peak}
makes precise.

\begin{figure}[t]
\begin{center}
\includegraphics[width=0.85\textwidth]{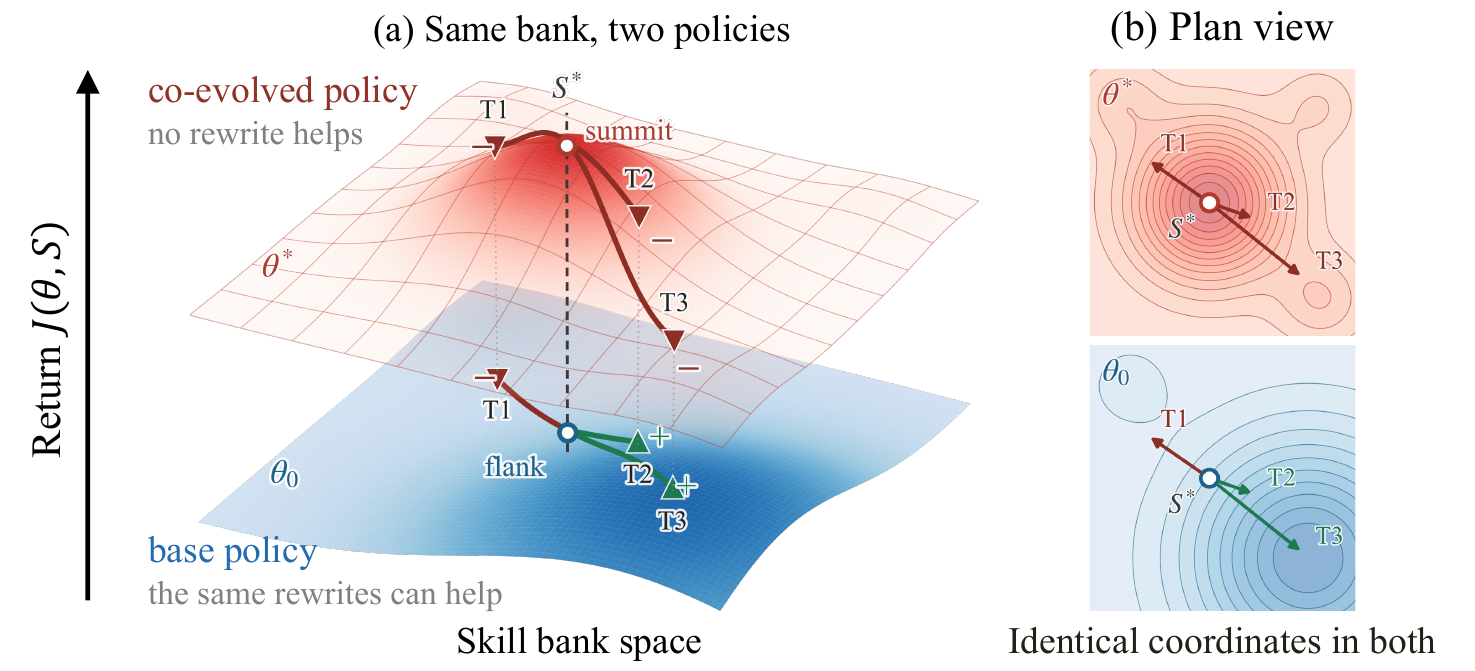}
\end{center}
\caption{Proposition~\ref{prop:peak} as a return landscape (schematic). For the co-evolved
policy $\theta^{*}$, the delivered bank is a peak over the covered family, so no covered
content perturbation ascends. A base policy $\theta_0$ need not share that peak, and the same
perturbations may improve its return. 
}
\label{fig:landscape}
\end{figure}

\begin{proposition}[Co-evolution peak]
\label{prop:peak}
Under Assumptions~\ref{as:accept} and~\ref{as:coverage}, the delivered pair
$(\theta^{*},\Bankstar)=(\theta_T,\Bank_T)$ satisfies
\begin{equation}
\mathcal{R}(\theta^{*}, \Bankstar) \subseteq (-\infty, 0].
\label{eq:nonpositive}
\end{equation}
Thus no covered content-level variant improves the co-evolved policy, and
$\Frag(\theta^{*},\Bankstar)=0$ only if return is constant on
$\Bnbhd(\Bankstar)$. For a policy $\theta_0$ not selected against $\Bankstar$ under these
assumptions, positive responses remain admissible:
$\max\mathcal{R}(\theta_0,\Bankstar)>0$ may hold.
\end{proposition}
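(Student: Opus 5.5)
The proof is a direct unwinding of the two assumptions; the only care needed is in interpreting the last two clauses correctly.

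\emph{Main claim.} By Assumption~\ref{as:accept} at the final round $t=T$, the delivered bank satisfies $\Bankstar=\Bank_T\in\arg\max_{\Bank\in\mathcal{C}_T}\Jobj(\theta_T,\Bank)$, with $\theta^{*}=\theta_T$. Here I take the delivered policy to be the one against which the last acceptance test was run. If the method ends with a policy step after the bank step, this identification must be made explicit as part of the setup. Take any $\Bank'\in\Bnbhd(\Bankstar)$. Assumption~\ref{as:coverage} gives $\Bank'\in\mathcal{C}_T$, so maximality yields $\Jobj(\theta^{*},\Bank')\le\Jobj(\theta^{*},\Bankstar)$. Every element of $\mathcal{R}(\theta^{*},\Bankstar)$ is therefore nonpositive, which is Eq.~\eqref{eq:nonpositive}.

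\emph{Fragility clause.} Since $\Bankstar\in\Bnbhd(\Bankstar)$ by Definition~\ref{def:nbhd}, the minimum in Eq.~\eqref{eq:frag} is at most $\Jobj(\theta^{*},\Bankstar)$. I also need the minimum to exist; this holds because $\Jobj$ takes values in $[0,1]$ and I assume the neighborhood is finite (or the minimum is read as an infimum). Suppose $\Frag(\theta^{*},\Bankstar)=0$. Then $\min_{\Bank'}\Jobj(\theta^{*},\Bank')=\Jobj(\theta^{*},\Bankstar)$. Combined with the upper bound just proved, every $\Bank'$ in the neighborhood has return exactly $\Jobj(\theta^{*},\Bankstar)$, so $\Jobj$ is constant on $\Bnbhd(\Bankstar)$. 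The converse holds trivially, so the ``only if'' is in fact an equivalence. The point of the clause is that any nonconstancy of the return must appear as strict drops, never as gains.

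\emph{Base-policy clause.} This is a non-implication, so I would prove it with an existence example rather than a universal statement. The first step is the structural observation that the argmax property in Assumption~\ref{as:accept} constrains only $\Jobj(\theta_T,\cdot)$. It places no restriction on $\Jobj(\theta_0,\cdot)$ for any $\theta_0\ne\theta_T$ that was not selected against. The second step is a toy instance consistent with both assumptions:
\begin{itemize}
\item a neighborhood $\{\Bankstar,\Bank'\}\subseteq\mathcal{C}_T$;
\item $\Jobj(\theta^{*},\Bankstar)=0.6$ and $\Jobj(\theta^{*},\Bank')=0.5$;
\item $\Jobj(\theta_0,\Bankstar)=0.3$ and $\Jobj(\theta_0,\Bank')=0.4$.
\end{itemize}
This instance satisfies acceptance for $\theta^{*}$ and gives $\max\mathcal{R}(\theta_0,\Bankstar)=0.1>0$. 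Since $\pol$ is an arbitrary conditional model, such a return table is realizable, for example by policies whose success depends on the skill text in the prescribed way.

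The main obstacle is not technical. It lies in stating the scope precisely: the indexing of $\theta_T$ versus the final policy step, and the attainment of the minimum. Once these are fixed, the argument is a one-line consequence of maximality over a superset.
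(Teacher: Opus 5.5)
Your proof is correct and matches the paper's argument: maximality of $\Bankstar$ over $\mathcal{C}_T\supseteq\Bnbhd(\Bankstar)$ gives the sign claim, and squeezing against that upper bound gives the fragility clause. For the base-policy clause you also make the same observation as the paper, that Assumption~\ref{as:accept} constrains only $\Jobj(\theta_T,\cdot)$, and the only difference is the witness: you use an abstract return table, whereas the paper points to its linear--Gaussian construction with a misaligned form weight (Corollary~\ref{cor:signflip}), and either suffices.
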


Intuitively, selection places the delivered bank at a peak for the policy it co-evolved with,
so no covered direction leads uphill (Figure~\ref{fig:landscape}). The result is one-sided:
a policy not selected against this bank need not lie at the same peak, so a rewrite may help.
Because both assumptions idealize the training process and its coverage, we test their
prediction rather than claim to verify the premises: at delivery, no tested content-level
perturbation improves performance beyond evaluation noise
(Section~\ref{sec:empirics}).

\subsection{The Cause: an Objective That Never Marginalizes over Banks}
\label{ssec:prop3}

\begin{proposition}[The point-mass objective prices degradation at zero]
\label{prop:marginal}
For any policy $\theta$ and distribution $P$ with
$\mathrm{supp}(P)\subseteq\Bnbhd(\Bankstar)$,
\begin{equation}
\Exp_{\Bank \sim P}\!\left[\Jobj(\theta, \Bank)\right] \;=\; \underbrace{\Jobj(\theta, \Bankstar)}_{\text{peak}} \;-\; \underbrace{\Exp_{\Bank \sim P}\!\left[\Jobj(\theta, \Bankstar) - \Jobj(\theta, \Bank)\right]}_{\text{average degradation } \;\le\; \Frag(\theta, \Bankstar)}.
\label{eq:decomposition}
\end{equation}
For the delivered policy $\theta^{*}$ under Assumptions~\ref{as:accept}
and~\ref{as:coverage}, the second term lies in
$[0,\Frag(\theta^{*},\Bankstar)]$. The current point-mass case
$P=\delta_{\Bankstar}$ removes it by construction.
\end{proposition}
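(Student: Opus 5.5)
The plan is to treat Eq.~\eqref{eq:decomposition} as an identity obtained by adding and subtracting the peak. After that, the bounds on the degradation term follow from Definition~\ref{def:frag} and Proposition~\ref{prop:peak}.

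\textbf{Step 1 (the identity).} Fix $\theta$ and $P$ with $\mathrm{supp}(P)\subseteq\Bnbhd(\Bankstar)$. The map $\Bank\mapsto\Jobj(\theta,\Bank)$ takes values in $[0,1]$, because $R$ is a binary indicator. Hence it is bounded and integrable under $P$; if $\Bnbhd(\Bankstar)$ is infinite, I would state measurability as a standing convention. By linearity of expectation,
\[
\Exp_{\Bank\sim P}[\Jobj(\theta,\Bank)] = \Jobj(\theta,\Bankstar) - \Exp_{\Bank\sim P}\bigl[\Jobj(\theta,\Bankstar)-\Jobj(\theta,\Bank)\bigr],
\]
since $\Jobj(\theta,\Bankstar)$ is a constant with respect to $P$.

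\textbf{Step 2 (upper bound, any $\theta$).} For every $\Bank$ in the support, $\Bank\in\Bnbhd(\Bankstar)$. Eq.~\eqref{eq:frag} then gives
\[
\Jobj(\theta,\Bankstar)-\Jobj(\theta,\Bank)\le \Jobj(\theta,\Bankstar)-\min_{\Bank'\in\Bnbhd(\Bankstar)}\Jobj(\theta,\Bank')=\Frag(\theta,\Bankstar).
\]
Taking expectations, the average degradation is at most $\Frag(\theta,\Bankstar)$. If the minimum in Eq.~\eqref{eq:frag} is not attained, I read it as an infimum and the same inequality holds.

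\textbf{Step 3 (lower bound for $\theta^{*}$).} Under Assumptions~\ref{as:accept} and~\ref{as:coverage}, Proposition~\ref{prop:peak} gives $\Jobj(\theta^{*},\Bank)-\Jobj(\theta^{*},\Bankstar)\le 0$ for every $\Bank\in\Bnbhd(\Bankstar)$. So the integrand is pointwise nonnegative on the support of $P$, and its expectation is $\ge 0$. Together with Step 2, the second term lies in $[0,\Frag(\theta^{*},\Bankstar)]$.

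\textbf{Step 4 (point mass).} For $P=\delta_{\Bankstar}$, which is admissible because $\Bankstar\in\Bnbhd(\Bankstar)$ by Definition~\ref{def:nbhd}, the integrand evaluated at $\Bankstar$ is $0$. The objective therefore reduces to the peak, and degradation carries zero weight.

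None of these steps is a real obstacle; the argument is bookkeeping. The only point needing care is Step 3. The nonnegativity of the degradation term must be derived for $\theta^{*}$ from Proposition~\ref{prop:peak}, not claimed for general $\theta$. For an arbitrary policy, a rewrite may improve the return (as noted after Proposition~\ref{prop:peak}), so the term can be negative. The statement is worded to reflect exactly this asymmetry.
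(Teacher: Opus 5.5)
Your proposal is correct and matches the paper's proof: add and subtract $\Jobj(\theta,\Bankstar)$ inside the expectation, use Proposition~\ref{prop:peak} to make the degradation integrand pointwise nonnegative for $\theta^{*}$, bound it above by $\Frag$ because every integrand is, and note that $P=\delta_{\Bankstar}$ makes the integrand vanish. Your Step 2 is slightly more complete than the paper's, since you prove the bound $\le \Frag(\theta,\Bankstar)$ for arbitrary $\theta$, as the statement's underbrace asserts, whereas the paper spells it out only for $\theta^{*}$.
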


Equation~\eqref{eq:decomposition} is an algebraic identity and requires no hypothesis on the
policy class. Under a richness condition, Appendix~\ref{app:proof-marginal} shows
that equal peak return can coexist with different fragility, so optimizing the peak alone
need not identify a robust policy. The issue is therefore not that co-evolution optimizes in
the wrong direction, but that it optimizes at one bank while deployment presents a
neighborhood. How to choose and sample from $P$ without sacrificing peak return remains open.

\paragraph{Scope.}
Neither proposition assumes a particular model class, task distribution or reward; both
apply to any method whose delivered pair satisfies Assumptions~\ref{as:accept}
and~\ref{as:coverage} (proofs in Appendix~\ref{app:proofs}). Under analogous coverage,
reversing the selection preference drives $\Frag$ down instead
(Appendix~\ref{app:adversarial}), so the selection bias---not co-evolution alone---governs
whether robustness emerges. 
The magnitude and training-time evolution of $\Frag$ are examined empirically in Section~\ref{ssec:dose}.

\section{Empirical Evidence: Fragility Grows with Co-evolution}
\label{sec:empirics}

\subsection{A Five-Axis Skill-Drift Probe}
\label{ssec:probe}

To measure the response set of Definition~\ref{def:frag}, we perturb the delivered skill bank while holding the task and policy fixed. Five axes represent increasingly structural
forms of drift. \textbf{T1 (semantic)} rewrites a skill synonymously, changing wording but
preserving meaning. \textbf{T2 (sufficiency)} deletes necessary steps or injects irrelevant
content, leaving the skill on topic but incomplete. \textbf{T3 (misleading)} replaces the
prescribed strategy with an incorrect one. \textbf{T4 (ecosystem)} changes bank composition
by adding, removing or mixing skill versions. \textbf{T5 (autonomy)} removes the bank
entirely. T1--T3 alter existing skill content and lie within
$\Bnbhd(\Bankstar)$; T4 and T5 are reported as structural diagnostics outside it.
Figure~\ref{fig:probe} illustrates all five axes; Appendix~\ref{app:examples} provides detailed examples.

\begin{figure}[t]
\begin{center}
\includegraphics[width=0.9\textwidth]{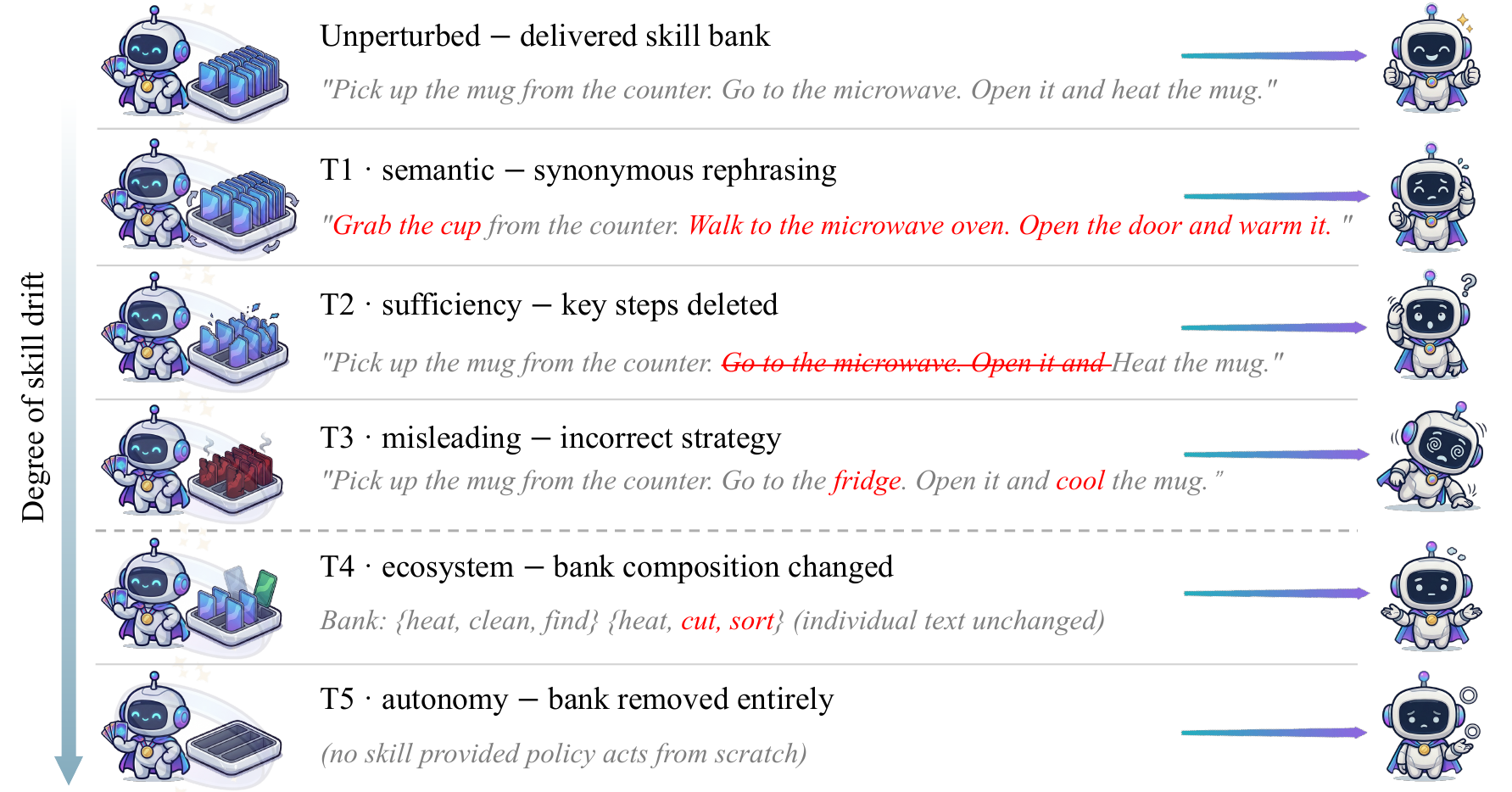}
\end{center}
\caption{The five-axis probe on one ALFWorld task. 
Each row applies a single operator to the delivered bank and reports its effect on the same skill text, with the edit marked in red.
T1--T3 alter selected skill content
under fixed retrieval and lie within $\Bnbhd(\Bankstar)$; T4 changes bank composition,
and T5 removes the bank. The task and policy remain fixed, and outcomes are schematic.}
\label{fig:probe}
\end{figure}

An external language model produces the variants from domain- and axis-specific prompts.
For T1--T3, retrieval runs once on the original bank and only the selected content fields
are replaced, preventing retrieval changes from confounding content changes. T4 and T5
alter bank composition by design. Full prompts, parameters, examples and masking details
are in Appendices~\ref{app:spec}, \ref{app:examples} and~\ref{app:masking}.

\subsection{Tracing Fragility Along Co-evolution}
\label{ssec:dose}

A two-point comparison between the base and delivered policies is difficult to compare
across settings because base performance ranges from near zero to substantial. We instead
trace checkpoints along one training trajectory while holding the delivered bank
$\Bankstar$ and its perturbed variants fixed. 
The policy checkpoint is therefore the only changing component. This design yields our proposed \emph{co-evolution fragility curve}, a within-run longitudinal diagnostic of whether fragility grows with co-evolution.
Where available, the base
(\texttt{s0}) checkpoint anchors the left endpoint; because it did not co-evolve with
$\Bankstar$, positive responses remain admissible under Proposition~\ref{prop:peak}.

We retain checkpoints from the start of the run through its peak unperturbed success rate.
Later checkpoints are not larger doses of \emph{successful} co-evolution, whereas local
training dips within the retained segment remain included
(Appendix~\ref{app:results}).

\paragraph{Absolute and normalized readings.}
We track each axis with two numbers. The absolute \emph{drop} ($\Drop$; larger means more
fragile) is the performance lost at a given checkpoint,
\begin{equation}
\Drop_a(t) \;=\; \SR(\Bankstar) - \SR(\Bankstar_a),
\label{eq:drop}
\end{equation}
where $\SR(\Bankstar)$ is the unperturbed success rate of checkpoint $t$ and
$\SR(\Bankstar_a)$ is its success rate under the axis-$a$ perturbation. The
\emph{retention performance} ($\RP$; higher is better) is competence-normalized,
\begin{equation}
\RP_a(t) \;=\; \frac{\SR(\Bankstar_a)}{\SR(\Bankstar)},
\label{eq:rp}
\end{equation}
so checkpoints of different ability compare on one scale (retaining 90\% of the score gives
$\RP = 0.9$ whether the baseline is 30\% or 80\%). Reporting both rules out the objection
that a larger drop merely reflects more headroom, which retention normalizes away.

\begin{figure}[t]
\begin{center}
\includegraphics[width=\textwidth]{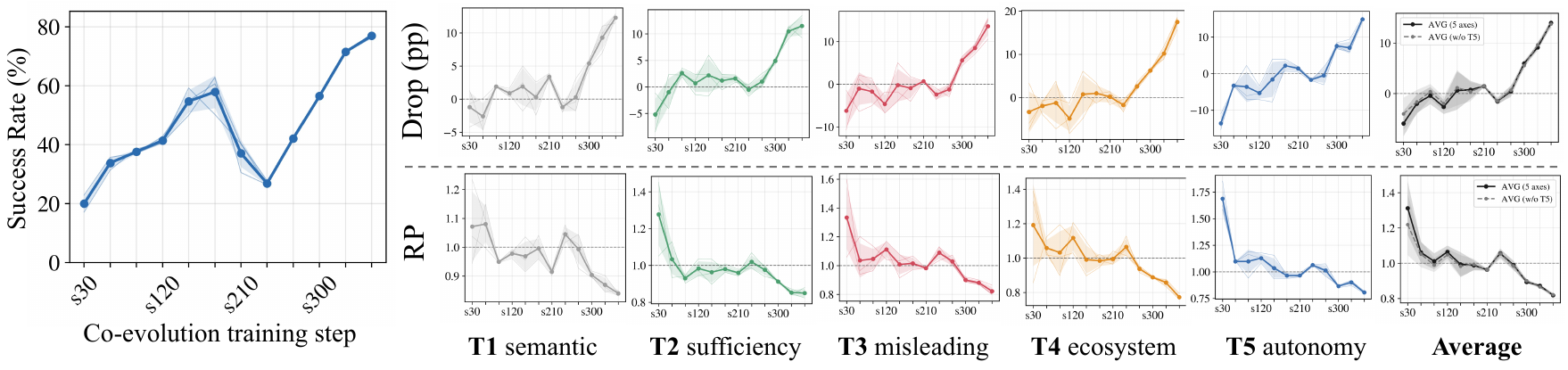}
\end{center}
\caption{Co-evolution fragility curve for \textsc{ReSkill} on ALFWorld. \textbf{Left:}
unperturbed success rate over the analyzed (rising) segment of training.
\textbf{Right, top row:} absolute drop per axis; \textbf{right, bottom row:}
retention performance $\RP$ per axis. Competence keeps rising (left) while the policy
grows more sensitive to perturbation of its own bank---$\Drop$ trends upward and its
normalized counterpart $\RP$ downward---so co-evolution buys peak performance at the cost of
fragility.}
\label{fig:dose-reskill}
\end{figure}

A minimal linear--Gaussian construction shows analytically that rising competence and fragility can emerge from the same co-evolution process (Appendix~\ref{app:proof-dose}).

\subsection{Setup and Results}
\label{ssec:results-empirics}

\paragraph{Experimental setup.}
We evaluate four skill--policy co-evolution methods over six method--environment settings:
\textsc{SkillRL}~\citep{xia2026skillrl} on ALFWorld~\citep{shridhar2020alfworld} and
WebShop~\citep{yao2022webshop}; \textsc{ReSkill}~\citep{he2026reskill} on
ScienceWorld~\citep{wang2022scienceworld} and ALFWorld;
\textsc{Evolving-RL}~\citep{fan2026evolving} on
Mind2Web~\citep{deng2023mind2web}; and \textsc{ARISE}~\citep{li2026arise} on
DeepScaleR-AMC23~\citep{luo2025deepscaler}. They share only the selection bias abstracted by
Assumption~\ref{as:accept}, so a trend common to all four tracks the paradigm rather than one
implementation. 
Appendix~\ref{app:expconfig} details method- and benchmark-specific configurations.

\paragraph{Training trend.}
The result is consistent across four methods and five benchmarks:
\textbf{as co-evolution training proceeds, the policy grows more capable with its own skill
bank but increasingly sensitive to changes in it} (Figure~\ref{fig:dose-reskill}). Unperturbed success rises from the first
to the final retained anchor in all six settings, including $5.2\%\to88.8\%$ for
\textsc{SkillRL}-ALFWorld, $16.4\%\to25.2\%$ for
\textsc{Evolving-RL}-Mind2Web and $15.5\%\to87.4\%$ for
\textsc{ARISE}-AMC23. At least one content-level axis has a significantly positive
$\Drop$ slope in five of six settings; across the eighteen axis--setting pairs, twelve are
significant---four on each of T1, T2 and T3. The effect is therefore not confined to
misleading skills: semantic and sufficiency perturbations also become costlier in multiple
settings.

The final delivered checkpoint is fragile in three concrete ways.
(i)~\emph{The one-sided prediction holds (18/18):} no content-level rewrite beats
evaluation noise in any of the eighteen axis--setting pairs, as
Proposition~\ref{prop:peak} predicts.
(ii)~\emph{The losses are substantial:} a meaning-preserving rewrite alone costs
$10.9$ points on \textsc{ReSkill}-ScienceWorld (peak $50.3\%$).
(iii)~\emph{The bank still helps under drift:} $\SR_{\min}$ stays above the no-skill anchor
in all six settings, so the failure is lost bank-derived gains, not negative skill value.
Full results are reported in Appendix~\ref{app:results}: per-anchor grids in
Tables~\ref{tab:full-skillrl-alfworld}--\ref{tab:full-arise-amc23}, heat maps in
Figures~\ref{fig:heat-skillrl-alfworld}--\ref{fig:heat-arise-amc23}, and the remaining
fragility curves in Figures~\ref{fig:dose-skillrl}--\ref{fig:dose-arise}.


The within-run trend and these delivered-pair results show one specialization pattern across
distinct methods and domains; the next section turns this dependence into \CREPE{}, a checkpoint-free audit of any delivered policy--bank pair.

\section{\CREPE{}: Auditing Co-evolved Policies for Skill Drift}
\label{sec:protocol}

The previous two sections established, theoretically and empirically, that co-evolution
leaves a policy fragile once its skills drift. What the field lacks, however, is not another
method---new update rules or acceptance tests---but a way to \emph{measure} the failure: a
shared standard that scores how well a delivered policy holds up when its bank changes,
comparably across methods. We therefore introduce \CREPE{}, a protocol that audits the one
artifact every method ships---a co-evolved policy and its delivered bank---and reports its
robustness to skill drift on a common footing.

\subsection{Design and the Shared Rewriting Suite}
\label{ssec:suite}

\CREPE{} audits only the delivered pair---a co-evolved policy and its delivered bank---plus
the shared suite, and assigns scores only to quantities with a predefined direction, so the
metric set stays closed; the design rationale (checkpoint-freeness, the direction-first rule,
and why diagnostics such as the surface-form ratio $\kappa$ of Appendix~\ref{app:kappa} are
excluded) is in Appendix~\ref{app:spec}.

The suite is the part of this paper meant to outlive it, under one hard rule:
\textbf{rewriting is performed by the released suite, not by the party under evaluation}---an evaluation whose conditions the evaluated party controls is not an evaluation.
For the content-level axes T1--T3, retrieval is run once on the unperturbed bank and only the
content of the selected skills is rewritten; the suite pins the rewriting model and its
sampling, releases the perturbed banks rather than a script, preserves bank structure, and
reports the edit strength it produced. The five requirements (R1--R5) that make this workable,
with their full rationale, are in Appendix~\ref{app:requirements}.

\subsection{A Gate and Three Robustness Metrics}
\label{ssec:metrics}
\label{ssec:collapse}

The protocol scores only T1--T3, the content-level axes within $\Bnbhd(\Bankstar)$ covered by Proposition~\ref{prop:peak}. T4 (ecosystem) and T5 (autonomy) are reported but not scored because they lack a consistent operator or direction across methods. The no-skill condition anchors the gate and resistance criterion rather than constituting a score (Appendix~\ref{app:metricdetail}).

Fix a delivered pair $(\theta^{*}, \Bankstar)$ and a test set of $N$ tasks; let
$c_i(\Bank) \in \{0,1\}$ record whether task $i$ succeeds under bank $\Bank$ and
$\SR(\Bank) = \frac{1}{N} \sum_i c_i(\Bank)$. Write $\Banknone$ for the empty bank and
$\Bankstar_a$ for axis $a \in \{\text{T1},\text{T2},\text{T3}\}$ applied to $\Bankstar$.
All comparisons are \emph{paired}: the same tasks, policy and selected skills. A
net-benefit gate admits the method, and three metrics then score it, one per content axis;
each is defined with its pass line below, and the supporting derivations (the
score-invariance bound and the fragility lower bound) are
collected in Appendix~\ref{app:metricdetail}.

\paragraph{Gate: net skill benefit.}
A bank must earn its place before its robustness matters,
\begin{equation}
\netben \;=\; \SR(\Bankstar) - \SR(\Banknone).
\label{eq:netben}
\end{equation}
A method with $\netben \le 0$ does not qualify. A small positive $\netben$, below
the noise band, is admitted but flagged: it leaves no headroom to tell degradation from
noise.

\paragraph{Metric 1: consistency rate ($\CR$), higher is better.}
A meaning-preserving rewrite (T1) should not alter task outcomes. The CR measures the fraction of outcomes unchanged by the rewrite,
\begin{equation}
\CR \;=\; \frac{1}{N} \,\big|\{\, i : c_i(\Bankstar_{\text{T1}}) = c_i(\Bankstar) \,\}\big|,
\label{eq:flip}
\end{equation}
where $\CR=1$ is ideal. Unlike $|\Delta\SR|$, $\CR$ captures opposing outcome flips that cancel at the aggregate level. To account for run-to-run variation, we use the consistency $\CR_0$ between two unperturbed runs as the baseline and declare a pass when $\CR \geq \CR_0-2\sigma_0$ (Appendix~\ref{app:metricdetail}).

\paragraph{Metric 2: resistance rate ($\RR$), higher is better.}
A judicious policy should resist an incomplete (T2) or incorrect (T3) skill by falling back on its own competence. We define the \emph{resistance rate} as the fraction of the full test set solved both without skills and under corruption:
\begin{equation}
\RR_a \;=\; \frac{1}{N}\big|\{\,i:c_i(\Banknone)=1 \wedge c_i(\Bankstar_a)=1\,\}\big|.
\label{eq:resist}
\end{equation}
Thus, $\RR_a$ is a within-setting diagnostic rather than a cross-benchmark ranking. Pass/fail is determined separately by the harmful-axis criterion
\begin{equation}
\SR(\Bankstar_a) \;\ge\; \SR(\Banknone),
\label{eq:harmpass}
\end{equation}
evaluated on paired tasks using McNemar's test. We report T2 and T3 separately because this criterion is stringent for incorrect skills but lenient for incomplete ones that retain partial value.

\paragraph{Metric 3: worst-case success rate ($\SR_{\min}$), higher is better.}
The worst-case success rate is the lowest mean performance across T1--T3:
\begin{equation}
\SR_{\min}=\min_{a\in\{\mathrm{T1},\mathrm{T2},\mathrm{T3}\}}\overline{\SR}(\Bankstar_a),
\qquad
\overline{\SR}_{\mathrm{avg}}=\frac{1}{3}\sum_a\overline{\SR}(\Bankstar_a),
\label{eq:worst}
\end{equation}
where the bar denotes the mean over repeats and $\overline{\SR}_{\mathrm{avg}}$ is reported secondarily. The gap $\SR(\Bankstar)-\SR_{\min}$ lower-bounds the fragility $\Frag$ in Definition~\ref{def:frag}; Appendix~\ref{app:metricdetail} provides the derivation.

\subsection{Verdict: An Absolute, Within-Method Pass or Fail}
\label{ssec:verdict}

\CREPE{}'s verdict is an absolute pass or fail, judged within a single method: it passes
when the gate clears ($\netben > 0$), the consistency rate stays within the $\CR_0$ noise
band, and every harmful axis satisfies Eq.~\eqref{eq:harmpass}. Every check reads a method
against its own numbers, never against another's; our headline---that the reported peak
conceals a much lower worst case---therefore ranks nothing and needs no cross-method
comparison. Ranking is a use we leave to adopters: once some pass,
$\SR_{\min}$ is the natural leaderboard key, and three raw quantities are published
regardless---the peak, the no-skill rate, and the per-axis perturbed rates.

Auditing each method on the bank it produced is a deliberate choice, not a concession: a
robustness protocol standardizes the operator and its budget, not the perturbation text, and no
shared bank exists in deployment anyway. The full argument---including why a bank too vague to
corrupt is caught by the gate---is in Appendix~\ref{app:fairness}.

\subsection{Main Result: the Report Card}
\label{ssec:setup}
\label{ssec:reportcard}

We audit four co-evolution methods across the six method--environment settings of
Section~\ref{ssec:results-empirics}, running \CREPE{} on each final delivered pair---not any
intermediate checkpoint---with perturbed banks from the shared rewriting suite. To keep the comparison fair, the peak we report is our
own re-evaluation of each pair under a single fixed protocol, not the number in the original
paper; because backbones differ, absolute numbers compare ``method $+$ backbone'' rather than
the method alone (Appendix~\ref{app:expconfig}). Table~\ref{tab:reportcard} is the audit: the
top block gives the headline gap between the peak and the worst case once the bank drifts, and
every other row is what \CREPE{} adds---how that same delivered system behaves once it no
longer controls the wording of its bank.

\begin{table}[t]
\caption{\CREPE{} report card. The top row is the peak these methods report, re-evaluated
under one fixed protocol; the rows below show how much of it survives once the delivered bank
drifts. The headline drop (shaded) is 5--13 points and erases 52--93\% of the skill benefit
over the no-skill baseline, and the consistency rate stays below its noise floor $\CR_0$
throughout---so the reported peak overstates the competence that actually deploys. Details in
Appendix~\ref{app:results}.}
\label{tab:reportcard}
\begin{center}
\scalebox{0.8}{
\begin{tabular}{lcccccc}
\toprule
 & \multicolumn{2}{c}{SkillRL} & \multicolumn{2}{c}{ReSkill} & Evolving-RL & ARISE \\
\cmidrule(lr){2-3} \cmidrule(lr){4-5} \cmidrule(lr){6-6} \cmidrule(lr){7-7}
Metric & ALFWorld & WebShop & ScienceWorld & ALFWorld & Mind2Web & AMC23 \\
\midrule
peak success rate $\SR(\Bankstar)$ & 88.8 & 52.4 & 50.3 & 76.9 & 25.2 & 87.4 \\
worst case $\SR_{\min}$            & 78.6 & 47.4 & 39.4 & 64.3 & 20.3 & 80.9 \\
\rowcolor{codahighlight} \textbf{peak$\to$worst drop} & \textbf{10.2} & \textbf{5.0} & \textbf{10.9} & \textbf{12.6} & \textbf{4.9} & \textbf{6.5} \\
\midrule
\rowcolor{gray!15} \multicolumn{7}{l}{\emph{Per-axis success rate}} \\
\quad T1 reword                    & 78.6 & 51.5 & 39.4 & 64.3 & 23.2 & 82.5 \\
\quad T2 incomplete                & 79.3 & 47.4 & 47.7 & 66.6 & 20.3 & 81.5 \\
\quad T3 misleading                & 79.5 & 51.7 & 47.7 & 65.4 & 20.4 & 80.9 \\
\quad no-skill anchor $\SR(\Banknone)$ & \underline{74.5} & \underline{45.9} & \underline{34.2} & \underline{63.3} & \underline{15.8} & \underline{75.3} \\
\midrule
\rowcolor{gray!15} \multicolumn{7}{l}{\emph{Robustness metrics} (higher better; not success rates)} \\
\quad consistency rate $\CR$       & 87.0 & 83.2 & 86.1 & 84.2 & 77.5 & 82.1 \\
\quad\quad \textcolor{gray}{noise floor $\CR_0$} & \textcolor{gray}{92.9} & \textcolor{gray}{91.9} & \textcolor{gray}{93.6} & \textcolor{gray}{92.2} & \textcolor{gray}{94.4} & \textcolor{gray}{93.7} \\
\quad resistance rate $\RR_{\text{T2}}$ & 70.2 & 37.8 & 31.0 & 60.6 & 12.2 & 63.5 \\
\quad resistance rate $\RR_{\text{T3}}$ & 71.2 & 40.0 & 33.5 & 58.9 & 11.4 & 65.4 \\
\bottomrule
\end{tabular}}
\end{center}
\end{table}

\paragraph{The reported peak overstates deployable competence.}
Across all six settings the worst-case success rate under content-level drift trails the peak
by 4.9--12.6 points (about 5--13): 4.9 on \textsc{Evolving-RL}-Mind2Web, 5.0 on
\textsc{SkillRL}-WebShop, 6.5 on \textsc{ARISE}-AMC23 and 10.2--12.6 on the other three. Only
the peak is customarily reported. Because the absolute drop scales with how much the bank
helps, we normalize by that headroom: drift consumes 52--93\% of the skill benefit in every
setting. The worst-case rate nonetheless stays above the no-skill baseline throughout, so the
failure is not that skills hurt but that most of the advertised gain is contingent on the
bank's exact wording and disappears once it drifts. What a deployment can rely on is therefore
nearer this worst case than the peak, by the majority of the benefit skills appear to add.

\paragraph{The worst case is often the mildest change.}
In three of the six settings the T1 (meaning-preserving) rate falls \emph{below} the harsher
T2 and T3 rates (e.g.\ \textsc{SkillRL}-ALFWorld 78.6 vs.\ 79.3/79.5), costing up to 12.6
points relative to the peak (\textsc{ReSkill}-ALFWorld 76.9$\to$64.3)---evidence that the
policy has overfit to the bank's phrasing rather than its meaning. The consistency rate makes
the churn explicit: at $77.5$--$87.0$ it sits below its noise floor $\CR_0 = 91.9$--$94.4$, so
$13$--$23\%$ of tasks flip outcome under a meaning-preserving rewrite, against $6$--$8\%$ under
decoding noise alone; the consistency criterion therefore fails in every setting---and with it
the verdict---while the net-benefit gate and the harmful-axis criterion pass. \CREPE{} does not
declare these methods useless: it quantifies how much of each reported peak is co-evolved with
the delivered bank, precisely the quantity current evaluation leaves unmeasured.

\section{Related Work}
\label{sec:related}

\paragraph{Skill--policy co-evolution.}
A growing family of methods co-evolves a skill library with the policy, including
\textsc{SkillRL}~\citep{xia2026skillrl}, \textsc{ReSkill}~\citep{he2026reskill},
\textsc{Skill1}~\citep{shi2026skill1}, \textsc{ARISE}~\citep{li2026arise},
\textsc{SAPO}~\citep{zhang2026co}, and
\textsc{SkillMaster}~\citep{yang2026skillmaster}.
These methods differ in update cadence, library growth, validation gates, and skill injection
(Table~\ref{tab:framework}), yet share the same evaluation premise: the co-evolved policy is
tested on its delivered bank. We introduce no new member of this family; instead, we examine
this shared premise and its consequences under deployment-time skill drift.

\paragraph{Agent robustness under skill drift.}
Robustness evaluations for language agents typically perturb the task side of the
interaction---instructions, observations, tool outputs, or environment dynamics.
We instead perturb the agent's own procedural knowledge, which task-side benchmarks treat as
fixed. Recent work documents that skill libraries drift and that agent performance can
degrade accordingly~\citep{zhang2026library,chen2026vce,liu2026well,chen2026scaling}, that
skills may harm rather than help~\citep{tank2026regression,yu2026skill}, and that
self-evolution is not always faithful~\citep{zhao2026large,lv2026can}.
Other approaches repair skills post hoc or construct more drift-tolerant
skills~\citep{liu2026skillrevise,park2026learning,shao2026lifting}; by contrast, we audit the
delivered policy's dependence on whichever bank it encounters.
This setting also resembles zero-shot coordination~\citep{hu2020otherplay}, with the skill
bank acting as a co-evolved partner that may change after training.
Appendix~\ref{app:related} provides further discussion.

\section{Conclusion}
\label{sec:conclusion}

This work shifts the evaluation of skill--policy co-evolution from performance on a single delivered pair to robustness under realistic deployment-time skill drift. Our analysis identifies cooperative acceptance as a source of dependence on the co-evolved bank, while the \emph{co-evolution fragility curve} shows that this dependence can strengthen over training. \CREPE{} operationalizes these findings by auditing whether a delivered policy remains consistent under benign rewrites and resists incomplete or misleading skills. The delivered peak is therefore informative but insufficient: future methods should preserve its gains while learning to interpret, reject, and adapt to skills as the bank evolves.

\section*{AI use statement}

Large language models play two distinct roles in this work. As a \emph{component of the
method}, the \CREPE{} rewriting suite calls \texttt{gpt-4o-2024-08-06} (temperature $0.7$,
with version and sampling parameters pinned as specified in Appendix~\ref{app:spec}) to
generate the perturbed skill banks that serve as evaluation stimuli; because these banks are
released, reproducing our evaluation does not require re-running the model. As an
\emph{authoring aid}, LLMs assisted only with language polishing, grammar, and style. All
research questions, methodology, experiments, analyses, and conclusions were conceived,
implemented, and verified by the authors, who take full responsibility for the content of
this paper.

\section*{Ethics statement}

The \CREPE{} suite deliberately produces incomplete (T2) and misleading (T3) procedural
instructions in order to measure how a delivered policy withstands skill drift. These
corrupted skills are generated and consumed entirely within the benchmark harness, are never
deployed or shown to real users, and are released alongside the suite so that their content
is transparent and auditable. The work involves no human participants, collects no personally
identifiable information, and introduces no new sensitive data; we see no direct risks to
privacy, fairness, or legal compliance, and the work complies with the ICLR Code of Ethics.

\section*{Reproducibility statement}

We take several concrete measures to make our results reproducible.
\begin{itemize}[leftmargin=*]
\item \textbf{Rewriting suite and protocol:} the rewriting model, its version, and all
sampling parameters are pinned (Section~\ref{ssec:suite}), and the domain contexts, layered
prompts, and resulting perturbed banks are released, so the evaluation can be reproduced
without re-running the generative model.
\item \textbf{Environments and data:} all benchmarks used (ALFWorld, WebShop, ScienceWorld,
Mind2Web, and DeepScaleR-AMC23) are publicly available; the test splits, episode counts, and
evaluation configuration are documented in Section~\ref{ssec:setup} and
Appendix~\ref{app:expconfig}, together with the released configuration.
\end{itemize}
Together, these measures allow independent researchers to reproduce our results and verify
the claims made in this paper.

\bibliography{iclr2027_conference}
\bibliographystyle{iclr2027_conference}

\appendix

\section{Limitations}
\label{app:limitations}

We note three limitations of this work.

\begin{enumerate}[leftmargin=*]
\item[\textbf{L1}] \emph{Rewriting is not bit-reproducible.} A language model rewriting a
skill is not a deterministic operator: pinned model versions, pinned sampling, and released
perturbed banks reduce the variance a re-runner faces but do not remove it. If the bank was
written by the same model family that later rewrites it, the measured fragility may be
optimistically low.

\item[\textbf{L2}] \emph{We diagnose but do not fix.} We expose the fragility and provide a
measurement tool, but propose no mitigation. The natural direction is the distributional
objective of Eq.~\eqref{eq:decomposition}---training against a distribution of bank
perturbations rather than a single fixed bank---but turning it into a practical method, and
keeping the net-benefit gate satisfied, is future work.

\item[\textbf{L3}] \emph{Scale is unstudied.} Whether broader exposure---more benchmarks,
larger banks, more data---reduces sensitivity to wording or merely raises the peak from
which the worst case falls away is beyond our design, and is the most interesting open
question beyond the fix itself.
\end{enumerate}

\section{Extended Related Work}
\label{app:related}

\paragraph{Skill--policy co-evolution.}
A growing family of methods lets an agent distill task experience into reusable skills and
then trains the policy on the enlarged bank~\citep{liu2026path, zou2026llm, yang2026skillforge, feng2026coskill}.
\textsc{SkillRL}~\citep{xia2026skillrl} distills successful trajectories into skills that are
added to a monotonically growing library and gated by reward on sampled rollouts.
\textsc{ReSkill}~\citep{he2026reskill} caps the library at a fixed size and uses Thompson
sampling to decide which incumbent a new candidate should replace.
\textsc{Skill1}~\citep{shi2026skill1} jointly optimizes the policy and bank end-to-end,
admitting skills from trajectories that already succeed.
\textsc{ARISE}~\citep{li2026arise} evolves a two-tier hierarchy of intrinsic skills under a
hierarchical reward.
\textsc{SAPO}~\citep{zhang2026co} admits a skill only when its estimated marginal
utility to the current policy is positive.
\textsc{SkillMaster}~\citep{yang2026skillmaster} lets the policy itself score candidates,
retaining high-scoring ones into a managed library.
These methods differ in cadence, library growth, validation gate, and injection mechanism
(Table~\ref{tab:framework}; Appendix~\ref{app:framework} gives the full classification).
What they do not differ in is how success is reported: the co-evolved policy is run on the
delivered bank and a task success rate is given.
Surveys chart this space~\citep{zhou2026comprehensive,gao2025survey,ren2026self}, and
recent variants push co-evolution to test-time~\citep{mao2026learning},
self-play~\citep{huang2026skill}, and deployment-time meta-learning~\citep{xia2026metaclaw}.
We contribute no new member of this family. Our object of study is the family
itself---specifically, the one premise that survives all of their design disagreements.

\paragraph{Agent robustness and adversarial co-evolution.}
Work on robustness for language agents perturbs the task side of the
interaction---instructions, observations, tool outputs, environment
dynamics---and asks whether performance survives. We perturb a different thing: the
agent's own store of procedural knowledge, which no task-side benchmark touches
because it is normally treated as part of the agent rather than part of the
environment. Concurrently, a cluster of works documents that skill libraries
drift and that agents degrade under that drift~\citep{zhang2026library,chen2026vce,liu2026well,chen2026scaling},
that skills can hurt as often as they help~\citep{tank2026regression,yu2026skill}, and that
self-evolution is not always faithful~\citep{zhao2026large,lv2026can}. These diagnose the
symptom; none ties it to the co-evolution objective or measures it on the delivered
artifact. SkillRevise~\citep{liu2026skillrevise} repairs skills post hoc, and
GenDa~\citep{park2026learning} with programmatic skill induction~\citep{shao2026lifting} build
drift-tolerant skills from the skill side; we instead audit the policy's dependence on
whatever bank it is handed. The closest conceptual relative is zero-shot
coordination~\citep{hu2020otherplay}, where a policy trained with one partner must
act with another it never co-evolved with; our setting is the single-agent analog,
with the skill library in the role of the partner that changed. A separate line
evolves an adversary alongside the learner, in game-theoretic heuristic
discovery~\citep{ke2026game} or attacker--defender safety
training~\citep{magic2026,pan2025advevo}, and reports robustness \emph{gains}. This is not a
counterexample to Proposition~\ref{prop:peak} but a confirmation of its premise
read in reverse: those methods accept a candidate precisely when it \emph{lowers}
the current policy's return, so the sign in Assumption~\ref{as:accept} is flipped
and the conclusion flips with it. Appendix~\ref{app:adversarial} works this through,
and the comparison is instructive rather than competitive: it locates the fragility
in the acceptance rule rather than in co-evolution as such.

\paragraph{Standardized perturbation suites.}
\CREPE{} belongs to a tradition of fixed perturbation suites applied to models their
designers did not build: corruption benchmarks in
vision~\citep{hendrycks2019benchmarking} and behavioral test suites in
NLP~\citep{ribeiro2020checklist} both consist of a released operator, a reporting
convention, and the finding that published systems fail more than their headline
numbers suggest. We follow that template on the skill side. One difference must be
stated rather than glossed over: those operators are deterministic algorithms that
reproduce bit-for-bit, whereas rewriting a skill with a language model does not. We narrow the gap: we pin the rewriting model, its version and its sampling parameters; we
ship the domain context so nothing is left to the user; and we release the perturbed banks
alongside the suite (Section~\ref{ssec:suite}). We do not close it, though---
Section~\ref{sec:conclusion} lists it first among the limitations.


\section{Classification of Existing Methods}
\label{app:framework}

The four axes of Table~\ref{tab:framework} are defined as follows. \emph{Update
cadence} determines how the two steps interleave, from strictly alternating through
near-synchronous to jointly optimized. The \emph{validation gate} determines how a
candidate bank is admitted: no test at all, an improvement test on sampled rollouts,
or a posterior test such as Thompson sampling. \emph{Library growth} is either
monotone, so that skills accumulate, or capped, so that admission displaces an
incumbent. The \emph{injection mechanism} determines what reaches the context at
inference: a batch selected by task category, or a small subset triggered per step.

We now place each surveyed method on these axes and state its acceptance rule---the
property that Assumption~\ref{as:accept} abstracts. In every case the gate admits a
candidate bank only when it does not lower (and usually strictly raises) the current
policy's return, however differently that test is implemented.

\begin{itemize}[leftmargin=*]
\item \textbf{\textsc{SkillRL}}~\citep{xia2026skillrl}: alternating cadence, monotone
growth, batch top-$k$ injection. A distilled skill is admitted whenever it does not
reduce reward on sampled rollouts and is never removed thereafter, so the retained bank
is the reward-non-decreasing one.
\item \textbf{\textsc{ReSkill}}~\citep{he2026reskill}: near-synchronous cadence, library
capped at eight, per-step injection. A candidate is validated by Thompson sampling and
kept only if its posterior return improves on the incumbent it would displace.
\item \textbf{\textsc{SAPO}}~\citep{zhang2026co}: near-synchronous cadence, two-tier capped
growth, reranked injection. A skill enters only when its estimated marginal utility to
the current policy is positive.
\item \textbf{\textsc{Skill1}}~\citep{shi2026skill1}: joint optimization, monotone growth,
log-probability injection. Skills are distilled from trajectories that already succeed
($r{=}1$) and trained jointly with the policy, so what is retained is exactly what
supported a return the policy did not lose.
\item \textbf{\textsc{ARISE}}~\citep{li2026arise}: joint optimization, two-tier growth,
log-probability injection. Intrinsic skills are evolved under a hierarchical reward and
retained on improvement of that reward.
\item \textbf{\textsc{SkillMaster}}~\citep{yang2026skillmaster}: near-synchronous cadence,
policy-managed growth and injection. The policy itself scores candidate skills and
retains the high-scoring ones, i.e.\ those that raise its own return.
\item \textbf{Evolving-RL}~\citep{fan2026evolving}: joint optimization, monotone growth, batch injection. A
skill is admitted end-to-end when it clears a reward threshold on the current policy.
\item \textbf{COSPLAY}~\citep{wu2026co}: near-synchronous cadence, lifecycle-managed growth, per-step
injection. Skills pass a contract-verification step and are retained over their
lifecycle while they continue to serve the policy's return.
\end{itemize}

The gates differ in form---an improvement test on rollouts, a Thompson posterior, a
marginal-utility check, a reward threshold, a policy score, a contract---but all favor
skills that serve the current policy. Assumption~\ref{as:accept} idealizes this shared
selection bias. Proposition~\ref{prop:peak} is model-agnostic but additionally requires the
candidate-coverage condition of Assumption~\ref{as:coverage}; it therefore applies
conditionally rather than automatically to the eight surveyed methods.

\section{Proofs and the Minimal Linear--Gaussian Construction}
\label{app:proofs}

\subsection{Setup and Notation}
\label{app:setup}

We restate the objects used throughout. A task $x$ is drawn from $D$, a skill bank
$\Bank$ lives in $\mathcal{S}$, and the return of a policy is
$\Jobj(\theta, \Bank) = \Exp_{x \sim D}[R(\pol(\cdot \mid x, \Bank))]$ as in
Eq.~\eqref{eq:objective}. At round $t$ the candidate set
$\mathcal{C}_t \subseteq \mathcal{S}$ collects the banks the method's proposal
mechanism could have produced and its gate could have retained;
Assumption~\ref{as:accept} states $\Bank_t \in \arg\max_{\Bank \in \mathcal{C}_t}
\Jobj(\theta_t, \Bank)$. The content-level neighborhood $\Bnbhd(\Bank)$ of
Definition~\ref{def:nbhd} contains a specified family of textual rewrites, includes $\Bank$
itself and excludes the empty bank. Assumption~\ref{as:coverage} requires
$\Bnbhd(\Bankstar)\subseteq\mathcal{C}_T$ only at delivery. The response
set $\mathcal{R}$ and fragility $\Frag$ are as in Definition~\ref{def:frag}. We write
$(\theta^{*}, \Bankstar) = (\theta_T, \Bank_T)$ for the delivered pair and drop the
round index when it is not needed.

\subsection{Proof of Proposition~\ref{prop:peak}}
\label{app:proof-peak}

\begin{proof}
For the first claim, Assumption~\ref{as:accept} at round $t = T$ gives
$\Jobj(\theta^{*}, \Bankstar) = \max_{\Bank \in \mathcal{C}_T} \Jobj(\theta^{*}, \Bank)$,
hence $\Jobj(\theta^{*}, \Bank') \le \Jobj(\theta^{*}, \Bankstar)$ for every
$\Bank' \in \mathcal{C}_T$. Since $\Bnbhd(\Bankstar) \subseteq \mathcal{C}_T$ by
Assumption~\ref{as:coverage}, the inequality holds in particular for
every $\Bank' \in \Bnbhd(\Bankstar)$, so each element
$\Jobj(\theta^{*}, \Bank') - \Jobj(\theta^{*}, \Bankstar)$ of
$\mathcal{R}(\theta^{*}, \Bankstar)$ is non-positive, which is
Eq.~\eqref{eq:nonpositive}.

For the second claim, $\Frag(\theta^{*}, \Bankstar) = 0$ means
$\min_{\Bank' \in \Bnbhd(\Bankstar)} \Jobj(\theta^{*}, \Bank') = \Jobj(\theta^{*}, \Bankstar)$.
Combined with the upper bound just established, this forces
$\Jobj(\theta^{*}, \Bank') = \Jobj(\theta^{*}, \Bankstar)$ for all
$\Bank' \in \Bnbhd(\Bankstar)$, i.e.\ the return is constant on the neighborhood.

For the third claim, Assumption~\ref{as:accept} constrains $\Jobj(\theta_0, \cdot)$
only if $\theta_0$ was the policy against which the bank step was solved. If it was
not, no constraint relates $\Jobj(\theta_0, \Bankstar)$ to
$\Jobj(\theta_0, \Bank')$ for $\Bank' \in \Bnbhd(\Bankstar)$, so
$\max \mathcal{R}(\theta_0, \Bankstar) > 0$ is consistent with the premise. An
explicit instance in which this maximum is strictly positive is given in
Appendix~\ref{app:proof-dose}.
\end{proof}

\begin{remark}[The one-sidedness is essential]
\label{rem:onesided}
Because $\Bankstar \in \Bnbhd(\Bankstar)$, the inequality $\Frag \ge 0$ holds for
every policy and carries no information about co-evolution. The content of
Proposition~\ref{prop:peak} is the \emph{sign} statement
$\mathcal{R} \subseteq (-\infty, 0]$: for a co-evolved policy the response set has no
positive part. Reporting only the magnitude of the worst-case drop therefore cannot
distinguish a co-evolved policy from a merely brittle one, which is why the
per-axis results in Section~\ref{sec:empirics} report the full sign pattern rather
than magnitude alone.
\end{remark}

\begin{remark}[Why removing skills is excluded]
\label{rem:noskill}
Removing every skill can \emph{raise} a co-evolved policy's return, which would
appear to contradict Eq.~\eqref{eq:nonpositive}. It does not: the empty bank lies
outside $\Bnbhd(\Bankstar)$ by Definition~\ref{def:nbhd}, and the exclusion is not a
convenience. Assumption~\ref{as:accept} says only that $\Bankstar$ beats what the
proposal mechanism \emph{offered}, and no co-evolution proposal mechanism offers to
delete the bank; the empty bank was therefore never rejected in favor of
$\Bankstar$ and nothing constrains its return. The theory's domain thus coincides
with the protocol's decision to treat removal as an anchor rather than a perturbation
(Section~\ref{ssec:collapse}).
\end{remark}

\subsection{Construction~\ref{con:minimal} and Proof of Proposition~\ref{prop:dose}}
\label{app:proof-dose}

Propositions~\ref{prop:peak} and~\ref{prop:marginal} bound the sign of the response
but say nothing about the magnitude of $\Frag$ or its evolution during training. We
exhibit the smallest instance containing both ingredients of co-evolution---a policy
that reads the bank, and a bank selected under Assumption~\ref{as:accept} to help the
current policy---in which both are available in closed form.

\begin{construction}[Minimal linear--Gaussian instance]
\label{con:minimal}
A binary label is predicted from two independent noisy channels: a \emph{semantic}
channel whose strength is fixed by the environment, and a \emph{form} channel whose
salience the bank step controls. The policy is a linear classifier trained as a best
response, and the bank step raises form salience under Assumption~\ref{as:accept}.
\end{construction}

\paragraph{The instance.}
A label $y \in \{-1, +1\}$ is uniform. The policy observes two scalar channels,
\begin{equation}
x_r \;=\; \mu_r\, y + \varepsilon_r, \quad \varepsilon_r \sim \mathcal{N}(0, \sigma_r^2),
\qquad
x_u \;=\; \mu_u\, y + \varepsilon_u, \quad \varepsilon_u \sim \mathcal{N}(0, \sigma_u^2),
\label{eq:channels}
\end{equation}
with independent noises. The \emph{semantic} channel $x_r$ has strength $\mu_r$ fixed
by the environment: it represents what the skill means, which no rewrite alters. The
\emph{form} channel $x_u$ carries the label only through the particular surface
encoding the bank supplies, and its salience $\mu_u$ is the quantity the bank step
controls. The policy is the linear rule $\mathrm{sign}(w \cdot x)$ with
$w = (w_r, w_u)$. Conditioned on $y$ we have
$w \cdot x \sim \mathcal{N}(y\,m, s^2)$ with $m = w_r \mu_r + w_u \mu_u$ and
$s^2 = w_r^2 \sigma_r^2 + w_u^2 \sigma_u^2$, so the return is
\begin{equation}
\Jobj(w, \mu_u) \;=\; \normcdf\!\left(\frac{w_r \mu_r + w_u \mu_u}{\sqrt{w_r^2 \sigma_r^2 + w_u^2 \sigma_u^2}}\right),
\label{eq:toyreturn}
\end{equation}
where $\normcdf$ is the standard normal CDF.

\paragraph{The dynamics.}
The \emph{policy step} maximizes Eq.~\eqref{eq:toyreturn} over $w$, which is the Fisher
discriminant $w \propto (\mu_r/\sigma_r^2,\; \mu_u/\sigma_u^2)$. The \emph{bank step}
follows Assumption~\ref{as:accept}: since Eq.~\eqref{eq:toyreturn} is increasing in
$\mu_u$ whenever $w_u > 0$, any candidate that raises form salience is accepted, so
$\mu_u^{(t)}$ is non-decreasing in $t$. Define the two squared signal-to-noise ratios
\begin{equation}
a \;=\; \frac{\mu_r^2}{\sigma_r^2} \quad (\text{fixed}),
\qquad
b^{(t)} \;=\; \frac{\big(\mu_u^{(t)}\big)^2}{\sigma_u^2} \quad (\text{non-decreasing; the \emph{dose}}).
\label{eq:snr}
\end{equation}

\paragraph{Clean and perturbed return.}
Substituting the Fisher weights into Eq.~\eqref{eq:toyreturn} gives numerator
$\mu_r^2/\sigma_r^2 + \mu_u^2/\sigma_u^2 = a + b$ and denominator
$\sqrt{\mu_r^2/\sigma_r^2 + \mu_u^2/\sigma_u^2} = \sqrt{a+b}$, hence
\begin{equation}
\Jobj_{\mathrm{clean}}^{(t)} \;=\; \normcdf\!\left(\sqrt{a + b^{(t)}}\right).
\label{eq:toyclean}
\end{equation}
A meaning-preserving rewrite changes the surface encoding while leaving the semantics
intact, so the direction the policy learned to read no longer predicts the label: in
the model, $\mu_u \to 0$ in the data while the weights, fixed at deployment, and the
channel noise are unchanged. The numerator becomes $a$ and the denominator is
unchanged, giving
\begin{equation}
\Jobj_{\mathrm{pert}}^{(t)} \;=\; \normcdf\!\left(\frac{a}{\sqrt{a + b^{(t)}}}\right),
\qquad
\Frag^{(t)} \;=\; \normcdf\!\left(\sqrt{a + b^{(t)}}\right) - \normcdf\!\left(\frac{a}{\sqrt{a + b^{(t)}}}\right).
\label{eq:toyfrag}
\end{equation}

\begin{proposition}[Dose--response, in Construction~\ref{con:minimal}]
\label{prop:dose}
Both peak return and fragility $\Frag^{(t)}$ are strictly increasing in the dose
$b^{(t)}$. A policy that does not read the form channel has $\Frag^{(t)} = 0$ for all
$t$.
\end{proposition}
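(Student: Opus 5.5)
The plan is to work directly from the closed forms in Eqs.~\eqref{eq:toyclean} and~\eqref{eq:toyfrag}, treating $b=b^{(t)}\ge 0$ as a continuous variable with $a>0$ fixed. Monotonicity of peak return is immediate: $b\mapsto\sqrt{a+b}$ is strictly increasing and $\normcdf$ is strictly increasing. For fragility, write $u(b)=\sqrt{a+b}$ and note $\Frag(b)=\normcdf(u)-\normcdf(a/u)$. As $b$ increases, $u$ increases strictly, so the first term increases strictly and $a/u$ decreases strictly. Hence the second term $\normcdf(a/u)$ decreases strictly. Strict monotonicity of the difference therefore follows without differentiating at all.

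If one prefers a derivative check, $\frac{d\Frag}{du}=\varphi(u)+\varphi(a/u)\,a/u^2>0$, with $\varphi$ the standard normal density. Either route gives strict increase in $b$. The monotone non-decrease of $b^{(t)}$ under Assumption~\ref{as:accept}, already argued in the dynamics paragraph, then transfers both statements to the training index $t$: non-decreasing in $t$, and strictly increasing whenever the dose strictly increases.

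It is worth recording that $\Frag\ge 0$, consistent with Proposition~\ref{prop:peak}. This holds because $a/\sqrt{a+b}\le\sqrt{a+b}$, which is equivalent to $a\le a+b$, with equality exactly at $b=0$. So fragility starts at zero at zero dose and rises from there.

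For the second claim, a policy that does not read the form channel has $w_u=0$. Its return $\normcdf(w_r\mu_r/(|w_r|\sigma_r))$ does not depend on $\mu_u$, so setting $\mu_u\to 0$ leaves the return unchanged and $\Frag^{(t)}=0$ for every $t$.

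The main obstacle is interpretive rather than computational. One must make sure the perturbation model is applied consistently: the weights stay frozen at the Fisher weights for the current $b^{(t)}$, only the data-side $\mu_u$ is zeroed, and the noise $\sigma_u$ is kept. This is what makes the denominator $\sqrt{a+b}$ persist while the numerator drops to $a$. I would state this explicitly, and also note the implicit requirement $a>0$ (and $w_u>0$ once $b>0$), so that the Fisher direction is well defined and the bank step's acceptance argument applies.
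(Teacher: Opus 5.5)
Your proposal is correct and follows the paper's proof essentially step for step: $\normcdf(\sqrt{a+b})$ is strictly increasing, $\normcdf(a/\sqrt{a+b})$ is strictly decreasing for $a>0$, so their difference $\Frag$ is strictly increasing; the non-decreasing dose carries this to the training index $t$; and with $w_u=0$ the return does not involve $\mu_u$, so the clean and perturbed returns coincide. Your derivative check, the remark that $\Frag$ vanishes exactly at $b=0$, and writing the $w_u=0$ return as $\normcdf(w_r\mu_r/(|w_r|\sigma_r))$ rather than the paper's $\normcdf(\sqrt{a})$ (which tacitly takes $w_r\mu_r>0$) are harmless refinements, not a different route.
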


\begin{proof}[Proof of Proposition~\ref{prop:dose}]
Write $\varphi_c(b) = \normcdf(\sqrt{a+b})$ and $\varphi_p(b) = \normcdf(a/\sqrt{a+b})$
for $b \ge 0$, so that $\Jobj_{\mathrm{clean}} = \varphi_c(b)$ and
$\Frag = \varphi_c(b) - \varphi_p(b)$. The map $b \mapsto \sqrt{a+b}$ is strictly
increasing and $\normcdf$ is strictly increasing, so $\varphi_c$ is strictly
increasing: peak return rises with the dose. The map $b \mapsto a/\sqrt{a+b}$ is
strictly decreasing for $a > 0$, since the numerator is constant and the denominator
strictly increases, so $\varphi_p$ is strictly decreasing. Hence
$\Frag = \varphi_c - \varphi_p$ is a sum of a strictly increasing function and the
negative of a strictly decreasing function, and is therefore strictly increasing in
$b$. Because $b^{(t)}$ is non-decreasing along the trajectory and strictly increases
at any accepted step, both $\Jobj_{\mathrm{clean}}^{(t)}$ and $\Frag^{(t)}$ increase
along training.

For the final claim, a policy with $w_u = 0$ never reads the form channel; then
$m = w_r \mu_r$ and $s^2 = w_r^2 \sigma_r^2$ both in the clean and in the perturbed
condition, so $\Jobj_{\mathrm{clean}} = \Jobj_{\mathrm{pert}} = \normcdf(\sqrt{a})$
and $\Frag = 0$ for every $t$.
\end{proof}

\begin{corollary}[The sign reversal of Proposition~\ref{prop:peak}, realized]
\label{cor:signflip}
Let $\theta_0$ have a misaligned form weight, $w_u \mu_u < 0$, as may happen for a
policy that never co-evolved to this bank. Then the numerator of Eq.~\eqref{eq:toyreturn}
satisfies $w_r \mu_r + w_u \mu_u < w_r \mu_r$, while the perturbed numerator is
$w_r \mu_r$ and the denominator is unchanged; hence
$\Jobj_{\mathrm{pert}} > \Jobj_{\mathrm{clean}}$ and
$\max \mathcal{R}(\theta_0, \Bankstar) > 0$. Destroying a channel the policy was
reading incorrectly improves it, which is the admissible case of
Proposition~\ref{prop:peak}.
\end{corollary}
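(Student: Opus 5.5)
The plan is a direct comparison of the two arguments of $\normcdf$ in Eq.~\eqref{eq:toyreturn}, carried out inside Construction~\ref{con:minimal}. Throughout I will use the perturbation model fixed in the derivation of Eq.~\eqref{eq:toyfrag}. A meaning-preserving rewrite sends $\mu_u \to 0$ in the data. It leaves the weights $w=(w_r,w_u)$ of $\theta_0$ and the noise variances $\sigma_r^2,\sigma_u^2$ unchanged. The Fisher form of the weights is never used, since $\theta_0$ is by hypothesis not the best response to this bank. Only the general return formula Eq.~\eqref{eq:toyreturn} for an arbitrary linear rule is needed.

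First I check that both returns are well defined. The hypothesis $w_u\mu_u<0$ forces $w_u\neq 0$. Since $\sigma_u^2>0$, this gives $s^2=w_r^2\sigma_r^2+w_u^2\sigma_u^2>0$, so the common denominator $s$ is strictly positive in both conditions. Next I compare numerators. The clean numerator is $m_{\mathrm{clean}}=w_r\mu_r+w_u\mu_u$ and the perturbed one is $m_{\mathrm{pert}}=w_r\mu_r$. Their difference is $m_{\mathrm{pert}}-m_{\mathrm{clean}}=-w_u\mu_u>0$. Dividing by the same $s>0$ preserves the strict inequality. Because $\normcdf$ is strictly increasing, this yields $\Jobj_{\mathrm{pert}}>\Jobj_{\mathrm{clean}}$.

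The last step links this inequality to $\mathcal{R}$ as in Definition~\ref{def:frag}. The perturbed condition is the model's image of a T1-type content rewrite of $\Bankstar$. Such a rewrite keeps the skill set and retrieval-side fields fixed, so it is an element $\Bank'\in\Bnbhd(\Bankstar)$ under Definition~\ref{def:nbhd}. Hence $\Jobj(\theta_0,\Bank')-\Jobj(\theta_0,\Bankstar)=\Jobj_{\mathrm{pert}}-\Jobj_{\mathrm{clean}}>0$ belongs to $\mathcal{R}(\theta_0,\Bankstar)$, and so $\max\mathcal{R}(\theta_0,\Bankstar)>0$. This is exactly the instance promised in the third claim of Proposition~\ref{prop:peak}. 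Assumption~\ref{as:accept} is not violated, because it constrains only the policy the bank step was solved against, and $\theta_0$ is not that policy.

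The only delicate point is this identification of the toy perturbation $\mu_u\to 0$ with a member of $\Bnbhd(\Bankstar)$. It is a modeling stipulation of Construction~\ref{con:minimal} rather than something to be derived, so I will state it explicitly instead of proving it. I will also note the one degenerate case to exclude, $w=0$, which the hypothesis already rules out.
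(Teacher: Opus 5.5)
Your proposal is correct and follows the paper's argument. The denominator is common to both conditions, the perturbed numerator exceeds the clean one by $-w_u\mu_u>0$, and strict monotonicity of $\normcdf$ gives $\Jobj_{\mathrm{pert}}>\Jobj_{\mathrm{clean}}$, hence a strictly positive element of $\mathcal{R}(\theta_0,\Bankstar)$; your checks that $s>0$ and your explicit flagging of $\mu_u\to 0$ as a modeling stand-in for a member of $\Bnbhd(\Bankstar)$ make explicit what the paper leaves implicit.
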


\begin{remark}[Modeling choice in the policy step]
\label{rem:beststep}
The policy step above is idealized as an exact best response, which is what makes
Eq.~\eqref{eq:toyfrag} exact. Modeling it as gradient ascent is closer to practice
but makes the closed form asymptotic, in which case Proposition~\ref{prop:dose} reads
``increasing in the limit'' rather than ``strictly increasing''. We keep the
best-response form for its clean closed form and note the gradient-ascent reading here.
\end{remark}

\begin{remark}[What the construction does not explain]
\label{rem:toylimit}
Within Construction~\ref{con:minimal} a harmful rewrite corresponds to reversing the
form channel, $\mu_u \to -\mu_u$, which yields numerator $a - b$ and hence a larger
degradation than the meaning-preserving rewrite for every $b > 0$. The instance
therefore predicts that harmful edits hurt more than meaning-preserving ones. We do
not test this ordering, and flag it as a limitation of the instance: it isolates what
a two-channel account can capture and leaves finer orderings among the axes open.
\end{remark}

\subsection{Proof of Proposition~\ref{prop:marginal} and the Invariance of the
Point-Mass Objective}
\label{app:proof-marginal}

\begin{proof}[Proof of Proposition~\ref{prop:marginal}]
Add and subtract $\Jobj(\theta, \Bankstar)$ inside the expectation,
\[
\Exp_{\Bank \sim P}\!\left[\Jobj(\theta, \Bank)\right]
= \Exp_{\Bank \sim P}\!\left[\Jobj(\theta, \Bankstar) - \big(\Jobj(\theta, \Bankstar) - \Jobj(\theta, \Bank)\big)\right]
= \Jobj(\theta, \Bankstar) - \Exp_{\Bank \sim P}\!\left[\Jobj(\theta, \Bankstar) - \Jobj(\theta, \Bank)\right],
\]
which is Eq.~\eqref{eq:decomposition} since $\Jobj(\theta, \Bankstar)$ does not depend
on $\Bank$. Under Assumptions~\ref{as:accept} and~\ref{as:coverage}, each integrand
$\Jobj(\theta^{*}, \Bankstar) - \Jobj(\theta^{*}, \Bank)$ is non-negative by
Proposition~\ref{prop:peak}, so the average degradation is non-negative; it is bounded
above by $\Frag(\theta^{*}, \Bankstar)$ because every integrand is. Taking
$P = \delta_{\Bankstar}$ makes the integrand vanish identically, recovering
$\Jobj(\theta, \Bankstar)$.
\end{proof}

The companion statement referred to in Section~\ref{ssec:prop3} says that the training
objective cannot see fragility at all. It is a statement about the \emph{objective},
not about a particular network, and unlike Proposition~\ref{prop:marginal} it needs a
hypothesis on the policy class.

\begin{proposition}[The point-mass objective is blind to fragility]
\label{prop:blind}
Suppose the policy class is rich enough that behavior at $\Bankstar$ can be held
fixed while behavior elsewhere on $\Bnbhd(\Bankstar)$ varies: for any
$\theta_1$ there exists $\theta_2$ with
$\pi_{\theta_1}(\cdot \mid x, \Bankstar) = \pi_{\theta_2}(\cdot \mid x, \Bankstar)$ for all $x$ and
arbitrary behavior on $\Bnbhd(\Bankstar) \setminus \{\Bankstar\}$. Then the training
objective $\Jobj(\cdot, \Bankstar)$ is invariant to $\Frag$: its maximizers span the
full achievable range of fragility, so training provides no signal on it.
\end{proposition}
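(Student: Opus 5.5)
The argument has two parts. First, the objective $\Jobj(\cdot,\Bankstar)$ depends on $\theta$ only through the conditional distributions $\pol(\cdot\mid x,\Bankstar)$. Second, the richness hypothesis lets us keep these fixed while we tune behaviour on the rest of the neighbourhood, and with it the fragility. Concretely, by Eq.~\eqref{eq:objective}, $\Jobj(\theta,\Bankstar)=\Exp_{x\sim D}[R(\pol(\cdot\mid x,\Bankstar))]$. So if $\pi_{\theta_1}(\cdot\mid x,\Bankstar)=\pi_{\theta_2}(\cdot\mid x,\Bankstar)$ for all $x$, then $\Jobj(\theta_1,\Bankstar)=\Jobj(\theta_2,\Bankstar)$. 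This is immediate, since $R$ acts on the induced action distribution and the expectation over $D$ is the same.

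Now fix a maximizer $\theta_1$ of $\Jobj(\cdot,\Bankstar)$, with value $J^{\star}$ (assuming the maximum is attained; otherwise work with $\varepsilon$-maximizers). For any target behaviour on $\Bnbhd(\Bankstar)\setminus\{\Bankstar\}$, the hypothesis gives a $\theta_2$ that agrees with $\theta_1$ at $\Bankstar$. Hence $\theta_2$ is also a maximizer, and by Eq.~\eqref{eq:frag}
\[
\Frag(\theta_2,\Bankstar)=J^{\star}-\min\Big(J^{\star},\ \min_{\Bank'\in\Bnbhd(\Bankstar)\setminus\{\Bankstar\}}\Jobj(\theta_2,\Bank')\Big).
\]
This is determined solely by the freely chosen off-$\Bankstar$ behaviour. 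Two choices bracket the achievable range:
\begin{itemize}[leftmargin=*]
\item copying the optimal conditional at $\Bankstar$ onto every $\Bank'$ (the policy ignores content rewrites) gives $\Jobj(\theta_2,\Bank')=J^{\star}$, hence $\Frag=0$;
\item choosing a behaviour that fails every task at some $\Bank'$ (e.g.\ any action sequence with $R=0$, which exists unless the task is trivial) gives $\Frag=J^{\star}$.
\end{itemize}
Any intermediate value $v\in[0,J^{\star}]$ is obtained by taking a mixture of these two conditionals at one $\Bank'$, with mixing weight $1-v/J^{\star}$ on the failing one. This yields $\Jobj(\theta_2,\Bank')=J^{\star}-v$ by linearity of $R$'s expectation in the action distribution. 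So the maximizer set realizes every fragility in $[0,J^{\star}]$. Since $0\le\Frag\le J^{\star}$ for any maximizer, because $\Jobj\ge0$, this is the full achievable range.

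The main obstacle is making ``arbitrary behavior'' and ``full achievable range'' precise without over-claiming. I would interpret arbitrary behavior as any assignment of action distributions $\pi(\cdot\mid x,\Bank')$ for $\Bank'\neq\Bankstar$, which is what makes the mixture step legitimate. I would also state explicitly that linearity of $\Jobj$ in the conditional holds because $R$ is a binary indicator averaged over sampled action sequences. If only restricted off-$\Bankstar$ behaviours are realizable, the conclusion weakens to this: the maximizers span whatever fragility values the class realizes given the fixed behaviour at $\Bankstar$. The second edge case is a degenerate environment in which every action succeeds. There $J^{\star}$ is constant everywhere, the range collapses to $\{0\}$, and the claim holds trivially. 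The closing sentence, ``training provides no signal,'' then follows because any gradient or selection signal derived from $\Jobj(\cdot,\Bankstar)$ is identical across $\theta_1$ and $\theta_2$, which differ in $\Frag$.
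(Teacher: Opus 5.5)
Your proof is correct and follows the paper's argument: agreement at $\Bankstar$ makes $\theta_2$ another maximizer, and its free behaviour on $\Bnbhd(\Bankstar)\setminus\{\Bankstar\}$ leaves $\Frag$ unconstrained; you improve on the paper by explicitly realizing every value in $[0,J^{\star}]$, which the paper only asserts. One small slip: weight $1-v/J^{\star}$ on the failing conditional gives $\Jobj(\theta_2,\Bank')=v$ (so $\Frag=J^{\star}-v$), so put weight $v/J^{\star}$ on it instead; the conclusion is unaffected.
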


\begin{proof}
Let $\theta_1$ be any maximizer of $\Jobj(\cdot, \Bankstar)$. By the
richness hypothesis there is $\theta_2$ agreeing with $\theta_1$ at $\Bankstar$ for
every task, hence with $\Jobj(\theta_2, \Bankstar) = \Jobj(\theta_1, \Bankstar)$, so
$\theta_2$ is also a maximizer, while its behavior on
$\Bnbhd(\Bankstar) \setminus \{\Bankstar\}$ is arbitrary and therefore so is
$\min_{\Bank'} \Jobj(\theta_2, \Bank')$. Consequently
$\Frag(\theta_2, \Bankstar)$ ranges over the full achievable interval as $\theta_2$
varies within the set of maximizers: the objective does not distinguish them, and its
gradient with respect to any parameter affecting only off-$\Bankstar$ behavior
vanishes.
\end{proof}

The hypothesis is what makes precise that $\Jobj(\cdot, \Bankstar)$ cannot distinguish
a fragile policy from a robust one. We do not claim both policies are reachable under a
shared-weight parameterization, where behavior at $\Bankstar$ and on its neighborhood
are coupled; this is why the main text relies on Eq.~\eqref{eq:decomposition} alone.

\begin{corollary}[Hedging in Construction~\ref{con:minimal}]
\label{cor:hedge}
Take $P$ to leave the form channel intact with probability $p$ and to destroy it
(setting $\mu_u \to 0$) otherwise, and use the scale invariance of the sign classifier
to fix $w_r = 1$. With $s^2 = \sigma_r^2 + w_u^2 \sigma_u^2$, the marginalized return is
\begin{equation}
\bar{J}(w_u) \;=\; p\,\normcdf\!\left(\frac{\mu_r + w_u \mu_u}{s}\right)
\;+\; (1-p)\,\normcdf\!\left(\frac{\mu_r}{s}\right).
\end{equation}
Since $\frac{d}{dw_u}\frac{\mu_r + w_u \mu_u}{s} = \frac{\mu_u \sigma_r^2 - \mu_r w_u \sigma_u^2}{s^3}$
and $\frac{d}{dw_u}\frac{\mu_r}{s} = \frac{-\mu_r w_u \sigma_u^2}{s^3}$,
\begin{equation}
s^3\,\bar{J}'(w_u) \;=\; p\,\varphi\!\left(\frac{\mu_r + w_u \mu_u}{s}\right)
\big(\mu_u \sigma_r^2 - \mu_r w_u \sigma_u^2\big)
\;-\; (1-p)\,\varphi\!\left(\frac{\mu_r}{s}\right)\mu_r w_u \sigma_u^2 ,
\end{equation}
where $\varphi$ is the standard normal density. Let
$w_u^{\star} = \mu_u \sigma_r^2 / (\mu_r \sigma_u^2)$ be the unhedged optimum, at which
the first bracket vanishes and $p = 1$ recovers the Fisher rule. For any $p < 1$ the
second term is strictly negative at $w_u^{\star}$, so $\bar{J}'(w_u^{\star}) < 0$ and the
maximizer satisfies $w_u^{(p)} < w_u^{\star}$: the policy underweights the form channel,
by an amount that grows with $1 - p$. The hedged weight lowers $\Jobj_{\mathrm{clean}}$
and shrinks $\Frag$, so the marginalized objective trades peak return for bounded
fragility, in the direction predicted by Eq.~\eqref{eq:decomposition}.
\end{corollary}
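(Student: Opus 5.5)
The plan is a direct calculus argument on $\bar{J}$. Throughout I assume $\mu_r,\mu_u>0$ and $0<p\le 1$; these are the conditions under which $w_u^{\star}>0$ and the form channel is informative.

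\emph{Derivative.} First I verify the stated derivative by the chain rule. Since $s=\sqrt{\sigma_r^2+w_u^2\sigma_u^2}$, we have $ds/dw_u=w_u\sigma_u^2/s$. Differentiating the quotient $(\mu_r+w_u\mu_u)/s$ and simplifying the numerator $\mu_u s^2-(\mu_r+w_u\mu_u)w_u\sigma_u^2$ gives $\mu_u\sigma_r^2-\mu_r w_u\sigma_u^2$. Differentiating $\mu_r/s$ gives $-\mu_r w_u\sigma_u^2/s^3$. Multiplying through by $s^3$ yields the displayed formula. I write it as $s^3\bar{J}'(w_u)=p\,A(w_u)-(1-p)\,B(w_u)$, where
\begin{itemize}
\item $A(w_u)=\varphi(\cdot)(\mu_u\sigma_r^2-\mu_r w_u\sigma_u^2)$, whose sign is that of $w_u^{\star}-w_u$, since the bracket is affine and strictly decreasing in $w_u$ and vanishes exactly at $w_u^{\star}$;
\item $B(w_u)=\varphi(\mu_r/s)\,\mu_r w_u\sigma_u^2$, whose sign is that of $w_u$.
\end{itemize}

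\emph{Sign of $\bar{J}'$ on three regions.}
\begin{itemize}
\item On $(-\infty,0]$: $A>0$ and $-B\ge 0$, so $\bar{J}'>0$ whenever $p>0$.
\item On $[w_u^{\star},\infty)$: $A\le 0$ and $B>0$, so $\bar{J}'<0$ for every $p<1$. At $w_u^{\star}$ itself this is the stated strict inequality $\bar{J}'(w_u^{\star})<0$.
\end{itemize}
Hence $\bar{J}$ is strictly increasing up to $0$ and strictly decreasing from $w_u^{\star}$ onward. By continuity, $\bar{J}$ attains its maximum on the compact interval $[0,w_u^{\star}]$. The endpoints are excluded: $\bar{J}'(0)>0$ and $\bar{J}'(w_u^{\star})<0$. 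So every global maximizer satisfies $0<w_u^{(p)}<w_u^{\star}$. At $p=1$, $B$ drops out and the unique critical point is $w_u^{\star}$, recovering the Fisher rule.

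\emph{Dependence on $1-p$.} The claim that the underweighting grows with $1-p$ is the step I expect to be the main obstacle, because the maximizer need not be unique. I would use monotone comparative statics rather than the implicit function theorem. On $(0,w_u^{\star})$ we have $\partial_p\big(s^3\bar{J}'\big)=A+B>0$, so $\bar{J}(w_u;p)$ has strictly increasing differences in $(w_u,p)$ on the region containing all maximizers. By Topkis/Milgrom--Shannon, the argmax set is then nondecreasing in $p$, and strictly so in the strong-set order. Equivalently, $w_u^{\star}-w_u^{(p)}$ is nondecreasing in $1-p$. If uniqueness is wanted, I would first try to show that $\bar{J}'$ has a single sign change on $(0,w_u^{\star})$. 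I would not rely on this, since the monotone-selection statement suffices.

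\emph{Consequences for clean return and fragility.}
\begin{itemize}
\item The clean return $\Jobj_{\mathrm{clean}}(w_u)=\normcdf\big((\mu_r+w_u\mu_u)/s\big)$ has derivative proportional to $A$. It is therefore strictly increasing on $(0,w_u^{\star})$, and $w_u^{(p)}<w_u^{\star}$ implies a strictly lower peak.
\item Fragility is $\Frag(w_u)=\normcdf\big((\mu_r+w_u\mu_u)/s\big)-\normcdf(\mu_r/s)$. On $(0,w_u^{\star})$ the first term is increasing. The second term is decreasing, since $s$ grows with $w_u>0$. So $\Frag$ is strictly increasing there, and the hedged weight gives strictly smaller fragility.
\end{itemize}
Together, these show that the marginalized objective trades peak return for reduced fragility, in the direction indicated by Eq.~\eqref{eq:decomposition}.
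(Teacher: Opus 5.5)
Your proposal is correct and follows the paper's route: the same chain-rule computation of $\bar J'$, and the same observation that at $w_u^{\star}$ the first bracket vanishes while the $(1-p)$ term is strictly negative. Beyond that, your sign analysis on $(-\infty,0]$ and $[w_u^{\star},\infty)$, your increasing-differences argument, and your monotonicity checks for $\Jobj_{\mathrm{clean}}$ and $\Frag$ on $(0,w_u^{\star})$ justify claims the paper's inline argument only asserts: it locates the maximizer from the sign of $\bar J'$ at the single point $w_u^{\star}$, and it gives no derivation for the $1-p$ dependence or the effects on $\Jobj_{\mathrm{clean}}$ and $\Frag$. Your comparative-statics step also upgrades to strict growth in $1-p$, because two distinct values of $p$ cannot share a critical point in $(0,w_u^{\star})$ when $\partial_p(s^3\bar J')=A+B>0$ there.
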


\subsection{Adversarial Co-evolution}
\label{app:adversarial}

We record the sign-flipped counterpart of Proposition~\ref{prop:peak} referred to in
the Scope paragraph of Section~\ref{ssec:prop3}. Replacing the cooperative gate of
Eq.~\eqref{eq:accept} by an adversarial one, as in solver--generator and
attacker--defender training~\citep{ke2026game,magic2026}, sets
$\Bank_t \in \arg\min_{\Bank \in \mathcal{C}_t} \Jobj(\theta_t, \Bank)$. Then
$\Jobj(\theta_t, \Bank_t) = \min_{\Bank \in \mathcal{C}_t} \Jobj(\theta_t, \Bank)$, so
every element of $\mathcal{R}(\theta_t, \Bank_t)$ is non-negative and
$\Frag(\theta_t, \Bank_t) = 0$: at the retained bank, no content-level perturbation
can hurt, because the retained bank already is the worst case. A policy step that
subsequently raises $\Jobj(\cdot, \Bank_t)$ therefore raises the neighborhood minimum
rather than the neighborhood maximum. This is the formal sense in which the sign of
the acceptance test, and not the presence of co-evolution, determines whether training
pressure acts on the worst case.
\section{\CREPE{} Protocol Specification}
\label{app:spec}

\subsection{Design Principles}
\label{app:designprinciples}
Two principles guide the protocol. \emph{Checkpoint-free}: \CREPE{} uses only the delivered
pair plus the shared suite, because step definitions are not comparable across methods, many
releases keep only the final checkpoint, and prompt-based evolution has no notion of a step at
all; the fragility curve of Section~\ref{ssec:dose} therefore serves as causal \emph{evidence}
in this paper, not a protocol metric. \emph{Direction-first}: each metric's direction is stated
before its formula, and any quantity without a defined direction serves as a gate rather than a
score, as T5 does. The metric set is thus closed: diagnostics without a defined direction, such
as the surface-form dependence ratio $\kappa$ of Appendix~\ref{app:kappa}, are excluded from the
report card.

\subsection{Requirements on the Rewriting Suite (R1--R5)}
\label{app:requirements}
Five requirements make the hard rule of Section~\ref{ssec:suite}---that rewriting is performed
by the released suite, not by the party under evaluation---workable across methods with
incomparable bank formats. The remaining subsections of this appendix specify each in
implementable detail.
\begin{itemize}[leftmargin=*]
\item \textbf{R1} \emph{Selection masking:} retrieval is run once on the unperturbed bank and
only the content of the selected skills is substituted (Appendix~\ref{app:masking}).
\item \textbf{R2} \emph{Field-classification interface:} a method declares which fields arew
retrieval-side (frozen) and which are content (rewritable), rather than standardizing a bank
format.
\item \textbf{R3} \emph{Reproducibility:} the rewriting model, its version and its sampling
are pinned, and the perturbed banks---not a script---are released (Appendix~\ref{app:prompt}).
\item \textbf{R4} \emph{Structural checks, not a semantic verifier:} only declared content
fields may change and the bank structure is preserved; meaning-preservation is left to the
released banks for inspection, not to a similarity threshold, and is validated in
Appendix~\ref{app:t1quality}.
\item \textbf{R5} \emph{Strength reporting:} the suite reports the edit distance and
similarity it produced, the analog of stating an attack budget.
\end{itemize}

\subsection{Perturbation Operators}
\label{app:operators}
A skill $\sigma$ is a record of typed fields, partitioned into a
\emph{retrieval-side} part $\rho(\sigma)$---the fields a method uses to decide when the
skill is surfaced, such as its title, trigger conditions, keywords and usage
statistics---and a \emph{content} part $\gamma(\sigma)$, the procedural text the policy
reads once the skill is in context. A bank is a finite set
$S = \{\sigma_1, \dots, \sigma_n\}$, and each axis is an operator on banks.

The content-level axes T1--T3 keep the set of skills fixed and rewrite only content
fields, so their image lies in the neighborhood $\Bnbhd(\Bankstar)$ of
Definition~\ref{def:nbhd}. Writing $g_a$ for the layered-prompt rewriter of
Appendix~\ref{app:prompt} applied field-by-field to the declared content fields, with
$\rho(\sigma)$ copied verbatim:
\begin{itemize}[leftmargin=*]
\item \textbf{T1 (semantic)} replaces $\gamma(\sigma)$ by a meaning-preserving
paraphrase $g_1(\gamma(\sigma))$: the wording changes, the prescribed behavior does not.
\item \textbf{T2 (sufficiency)} replaces $\gamma(\sigma)$ by $g_2(\gamma(\sigma))$ that
stays on topic but no longer suffices, either by deleting a required step or by
injecting irrelevant material.
\item \textbf{T3 (misleading)} replaces $\gamma(\sigma)$ by $g_3(\gamma(\sigma))$ that
is well-formed but prescribes an incorrect strategy.
\end{itemize}

The structural axes T4--T5 change which skills are present and therefore lie outside
$\Bnbhd(\Bankstar)$:
\begin{itemize}[leftmargin=*]
\item \textbf{T4 (ecosystem)} applies a sequence of composition edits---insert a new
skill, delete an incumbent, or replace a skill by an alternative version---leaving the
text of every surviving skill unchanged.
\item \textbf{T5 (autonomy)} removes the bank entirely, $T_5(S) = \varnothing$.
\end{itemize}

\subsection{Layered Rewriting Prompt}
\label{app:prompt}

Section~\ref{ssec:probe} states that skill rewrites are produced by
\texttt{gpt-4o-2024-08-06} at temperature $0.7$. This appendix expands the prompt
architecture that anchors the rewriter in each benchmark's semantics.

\paragraph{Why a layered prompt.}
Handing a skill to a general-purpose model with a single rewriting instruction risks
two failure modes: the output may reference actions or objects that do not exist in the
environment, and the edit may drift from the intended axis (e.g.\ a T1 ``paraphrase''
that silently removes a step, which is T2 territory). A layered prompt prevents both by
supplying the domain context the model lacks and restricting the scope of the edit.

\paragraph{Architecture.}
The prompt is assembled from five layers (Table~\ref{tab:prompt}). Layers~1--4 are
shared across all content-level axes and fix everything the rewrite must
respect---the environment, the tokens that must survive verbatim, the task category and
the skill's capability tag. Only the fifth layer carries the axis directive, so one
shared skeleton plus one swapped instruction produces every content-level variant. The
user turn supplies the specific skill field to be rewritten.

\paragraph{Field-level targeting.}
Each axis declares which fields it may touch and the remainder are copied verbatim.
T1, for instance, rewrites a skill's \texttt{principle} and \texttt{when\_to\_apply}
and the \texttt{description}, \texttt{why} and \texttt{how} of a recorded mistake,
while titles and identifiers are left unchanged---ensuring that retrieval-side metadata
is never accidentally altered.

\paragraph{T4 and T5.}
The ecosystem (T4) and autonomy (T5) axes are structural operations on the bank
(adding, removing or replacing skills, or removing the bank entirely) rather than text
rewrites, and do not use this prompt.

Since the same prompt need not return the same text twice, the banks we evaluated
against are released alongside the suite and are the reproducible artifact.

\begin{table}[t]
\caption{The layered rewriting prompt for the content-level axes. Layers~1--4 form a
shared skeleton; only the tier instruction (layer~5) changes between axes. The
per-benchmark \texttt{domain\_context} is released with the suite, so an adopter runs
the prompt with no manual authoring step.}
\label{tab:prompt}
\begin{center}
\small
\begin{tabular}{@{}cl p{7.2cm} c@{}}
\toprule
& Layer & Role & Scope \\
\midrule
\multicolumn{4}{@{}l}{\emph{System prompt}} \\
1 & \texttt{domain\_context}       & Task semantics and the executable action space of the benchmark. & shared \\
2 & \texttt{domain\_terms}         & Protected list: action and interface tokens preserved verbatim. & shared \\
3 & \texttt{category\_hint}        & The task category the skill belongs to. & shared \\
4 & \texttt{skill\_characteristic} & The skill's capability tag, focusing the edit on the right content. & shared \\
5 & tier instruction               & The axis directive: T1 rephrase synonymously; T2 delete steps or inject irrelevant material; T3 substitute an incorrect strategy. & per axis \\
\midrule
\multicolumn{4}{@{}l}{\emph{User turn}} \\
& field text & The skill field to rewrite (\texttt{principle}, \texttt{when\_to\_apply}, \dots). & per skill \\
\bottomrule
\end{tabular}
\end{center}
\end{table}

\subsection{Selection Masking}
\label{app:masking}
Masking separates the effect of rewritten content from the effect of changed retrieval.
Let $\mathrm{Ret}_\theta(S; i)$ denote the method's retrieval step: the subset of bank
$S$ it surfaces into context for task $i$ under policy $\theta$. For a content-level
axis $a \in \{\text{T1}, \text{T2}, \text{T3}\}$ and each task $i$:
\begin{enumerate}[leftmargin=*]
\item Run retrieval once on the unperturbed bank, $\mathrm{Sel}_i =
\mathrm{Ret}_\theta(\Bankstar; i)$, and record the ids of the selected skills.
\item Evaluate the perturbed condition without re-running retrieval: inject exactly the
skills whose ids lie in $\mathrm{Sel}_i$, each with its content fields replaced by the
axis-$a$ rewrite and its retrieval-side fields $\rho(\sigma)$ left untouched.
\end{enumerate}
Because the surfaced set is $\mathrm{Sel}_i$ in both the unperturbed and the perturbed
condition, any change in outcome is attributable to the rewritten content rather than to
a different set of skills being retrieved, and coverage is identical across methods
regardless of bank size. This constancy of the selection map on $\Bnbhd(\Bankstar)$ is
also what licenses folding selection into $\pol$ in Section~\ref{ssec:setting}, so that
$\Jobj$ can be treated as a function of the bank as a whole. Masking applies only to
T1--T3. The structural axes T4 and T5
change which skills exist by design, so retrieval is re-run against the modified bank and
the selected set is not held fixed; by Definition~\ref{def:nbhd} these axes fall outside
$\Bnbhd(\Bankstar)$, and the propositions of Section~\ref{sec:theory} make no claim about
them.

\subsection{Field-Classification Interface}

An adopter declares, once per bank format, which fields the suite may rewrite
(\emph{content}) and which must remain untouched (\emph{retrieval-side}). The content
fields are the target of T1--T3; the retrieval-side fields determine which skills are
selected and must be frozen so that performance changes can be attributed to content alone
(Section~\ref{ssec:probe}). Two examples from this paper:

\begin{center}
\scalebox{0.9}{
\begin{tabular}{lll}
\toprule
Format & Content (rewritable) & Retrieval-side (frozen) \\
\midrule
\textsc{SkillRL} & \texttt{principle}, \texttt{example} & \texttt{name}, \texttt{keywords}, \texttt{embedding} \\
\textsc{ReSkill} & \texttt{strategy} & \texttt{task\_type}, \texttt{score}, \texttt{embedding} \\
\bottomrule
\end{tabular}}
\end{center}

\noindent The declaration is a one-line JSON mapping; the suite validates it at run time and
refuses to rewrite a field classified as retrieval-side.

\subsection{Quality Control and Strength Reporting}
Three checks run automatically, and it is worth being exact about what they do and do
not establish.

\emph{Field targeting.} Each axis declares the fields it may rewrite and the remainder
are copied verbatim. T1, for instance, rewrites a skill's \texttt{principle} and
\texttt{when\_to\_apply} and the \texttt{description}, \texttt{why} and \texttt{how} of
a recorded mistake, while titles and identifiers are untouched.

\emph{Length band.} The tier instruction asks for a rewrite within $\pm 30\%$ of the
original length; enforcement is looser, rejecting a field only when the ratio of
rewritten to original length falls outside $[0.5, 2.0]$. A rejected field is
regenerated once, and the second attempt is accepted whatever its length. Separately,
the pipeline retries on transport failures with exponential backoff; those retries are
unrelated to quality and are not a check.

\emph{Structural validation of the misleading axis.} The T3 banks are verified after
generation to contain the same skill identifiers in the same order and the same number,
to leave titles and \texttt{when\_to\_apply} unchanged, to have modified every
\texttt{principle}, and to leave the recorded mistakes alone. This establishes that the
edit landed where it was supposed to land, not that the resulting strategy is wrong.

No check tests meaning. There is no embedding-similarity threshold, no entailment test
for T1 and no verification that a T3 variant genuinely prescribes an incorrect
strategy; requirement R4 (Appendix~\ref{app:requirements}) gives the reasoning, and the released banks are
what a reader inspects instead.
\paragraph{Strength reporting (R5).}
For each released perturbed bank, the suite logs two perturbation-strength statistics per
skill: (a)~the character-level edit distance between the original and rewritten text,
normalized by the original length, and (b)~the cosine similarity of sentence embeddings
(all-MiniLM-L6-v2) between the original and rewritten versions. These distributions are
released alongside the perturbed banks so that adopters can verify the perturbation
intensity falls within the intended range for each axis. The suite additionally records the
\emph{prompt-share statistic}: the fraction of the inference prompt occupied by injected
skill text, which contextualizes how much of the input the policy relies on from the bank.

\subsection{Metric Definitions and Logging Requirements}
\label{app:metricdetail}

This appendix carries out the three derivations referenced from
Section~\ref{ssec:metrics}, and states the per-episode logging requirement without
which the paired quantities in that section cannot be computed at all.

\begin{table}[t]
\caption{The \CREPE{} report card: the quantities every method must publish, with
the role each plays. Only the three scored rows enter the verdict of
Section~\ref{ssec:verdict}; the anchors are required because the scores are defined
against them. Definitions are given by the equation referenced in each row.}
\label{tab:metrics}
\begin{center}
\scalebox{0.92}{
\begin{tabular}{llllll}
\toprule
Quantity & Symbol & Definition & Role & Direction & Axes \\
\midrule
Peak success rate      & $\SR(\Bankstar)$              & ---                          & reported, not judged & ---              & --- \\
No-skill success rate  & $\SR(\Banknone)$              & ---                          & anchor               & undefined        & T5 \\
Net skill benefit      & $\netben$                     & Eq.~\eqref{eq:netben}        & gate                 & $>0$ required    & T5 \\
Consistency rate       & $\CR$                         & Eq.~\eqref{eq:flip}          & score                & higher better    & T1 \\
Resistance rate        & $\RR_a$                       & Eq.~\eqref{eq:resist}        & score                & higher better    & T2, T3 \\
Worst-case SR          & $\SR_{\min}$                  & Eq.~\eqref{eq:worst}         & ranking              & higher better    & T1--T3 \\
Mean perturbed SR      & $\overline{\SR}_{\text{avg}}$ & Eq.~\eqref{eq:worst}         & secondary            & higher better    & T1--T3 \\
\bottomrule
\end{tabular}}
\end{center}
\end{table}

\paragraph{(a) Why $\CR$ and not $|\Delta \SR|$.}
On the paired binary outcomes $(c_i(\Bankstar), c_i(\Bankstar_{\text{T1}}))$ let
$n_{10} = |\{i : c_i(\Bankstar)=1, c_i(\Bankstar_{\text{T1}})=0\}|$ and
$n_{01} = |\{i : c_i(\Bankstar)=0, c_i(\Bankstar_{\text{T1}})=1\}|$ be the counts of the
two directions of change. The consistency rate counts the tasks that do \emph{not} flip,
$\CR = 1 - (n_{10}+n_{01})/N$, and
\begin{equation}
|\Delta \SR| \;=\; \frac{|n_{10} - n_{01}|}{N} \;\le\; \frac{n_{10} + n_{01}}{N} \;=\; 1-\CR,
\label{eq:flipboundapp}
\end{equation}
with equality only when every flip runs the same way. Score invariance is therefore
strictly weaker than the behavioral invariance we are after: three tasks turning wrong
and three turning right leave $\Delta\SR = 0$ while six behaviors have changed, and a
method can achieve small $|\Delta\SR|$ by accident. This is why the semantic-consistency
metric is stated on $\CR$.

\paragraph{(b) $\SR_{\min}$ as a lower bound on $\Frag$.}
Definition~\ref{def:frag} sets
$\Frag(\theta^{*},\Bankstar) = \max_{S' \in \Bnbhd(\Bankstar)}\big(\SR(\Bankstar) - \SR(S')\big)$,
the largest drop over the full content-level neighborhood. The protocol only measures
the three axes T1--T3, so the maximum it can attain is
$\SR(\Bankstar) - \SR_{\min}$, taken over the three sampled banks $\Bankstar_a$. Since
these are elements of $\Bnbhd(\Bankstar)$,
\[
\SR(\Bankstar) - \SR_{\min} \;\le\; \Frag(\theta^{*},\Bankstar);
\]
the gap we report is thus a lower bound on the true fragility, not the exact worst case over
the full neighborhood.

\paragraph{Why T4 and T5 are reported but not scored.}
T5 (autonomy) has no defined direction: removing the bank may hurt because it held knowledge
the policy could not internalize, or because it held strategies the policy should have
absorbed, and the number alone cannot tell these apart; the no-skill condition therefore
serves as the anchor for the gate and the resistance criterion rather than as a score. T4
(ecosystem) is not the same operator across methods---bank sizes differ by an order of
magnitude, from tens or hundreds of skills to a cap of eight---so it is reported
(Section~\ref{sec:empirics}, Appendix~\ref{app:results}) but excluded from the score,
regardless of whether its effect is larger or smaller than the content-level axes in a given
setting.

\paragraph{Consistency and its noise floor.}
The consistency criterion tests $\CR$ against a two-standard-deviation band around $\CR_0$, the
agreement between two unperturbed runs, which fixes the repeat-level noise floor. Rather than
tabulate the per-setting standard deviations, we read the outcome from the flip rates directly:
$\CR_0$ corresponds to a $6$--$8\%$ flip rate between two unperturbed evaluations---the churn
attributable to decoding randomness alone---whereas the T1 rewrite flips $13$--$23\%$ of task
outcomes, two to three times that floor. This gap exceeds repeat-level evaluation noise in every
setting, so $\CR$ sits below the band and the consistency criterion fails throughout. The
per-episode logs needed to recompute $\CR$ and $\CR_0$ are released with the suite.

\paragraph{Logging requirements.}
The identities above and McNemar's test on Eq.~\eqref{eq:harmpass} all read paired binary
outcomes at the level of individual tasks. An adopter therefore must log, for every
evaluation condition (unperturbed, T1, T2, T3, and $\Banknone$) and every repeat, the
per-episode indicator $c_i \in \{0,1\}$ together with the task identifier $i$. Two
consequences follow:
\begin{itemize}[leftmargin=*]
\item \emph{Common task set.} The no-skill condition, the unperturbed condition, and each
axis-$a$ condition must be evaluated on the same $N$ task identifiers. Aggregating a
success rate on a slightly different task subset breaks $\RR_a$ and the McNemar
statistic, since the pairing no longer holds task-by-task.
\item \emph{Aggregated scores are not sufficient.} A report card that publishes only
$\SR$ per condition, without per-episode outcomes, is not auditable: neither $\CR$,
$\RR_a$ nor the McNemar $p$ can be recomputed from it.
\end{itemize}
The suite therefore requires per-episode logs alongside the aggregated report card, and
its reference implementation writes them out by default.

\subsection{Auditing Each Method on Its Own Bank}
\label{app:fairness}

Section~\ref{ssec:verdict} states that \CREPE{} audits each method on the bank it
delivered rather than on a shared bank, and gives a one-paragraph summary of why. The
complete argument runs to four points.

\emph{First, the unit of evaluation is the artifact the method delivers.} The bank is
not an external test condition supplied by us; it is half of what the method produced.
Auditing method A on A's bank measures exactly what A would deploy, whereas forcing
every method onto a shared bank measures a configuration nobody ships. A shared-bank
protocol would therefore answer a different question: how well does each method run on a
piece of text it did not choose. That is a reasonable question, but it is not a
robustness question about the delivered system.

\emph{Second, what a robustness protocol standardizes is the operator and the budget,
not the perturbation.} Projected gradient descent produces a different adversarial
example for every model because it uses that model's gradients, and no one calls the
comparison unfair: what is fixed is the attack---its algorithm and its budget---not the
resulting pixel. \CREPE{} is isomorphic. The rewriting instructions, the rewriting
model, and the coverage---the set of skills actually retrieved for each task---are all
fixed; the resulting text differs because the banks differ. The obligation this
introduces is stated as R5: the strength actually applied to each method must be
reported, so that unequal severity is visible rather than deniable.

\emph{Third, the obvious way to game the protocol does not work.} A method could try to
survive rewriting by shipping a bank so vague that corrupting it changes nothing. But a
bank vague enough to be uncorruptable is almost always vague enough to be useless, and
that shows up immediately in the gate of Eq.~\eqref{eq:netben}: a method that clears the
gate has published a bank with real content, and one that fails the gate does not
qualify. The two requirements pull against each other---benefit demands that the
bank carry real content, robustness demands that the policy not depend on its exact
wording---and a method must satisfy both with a single artifact.

\emph{Fourth, deployment fidelity points the same way.} There is no shared bank in the
real world: each deployment owns a library that changes on its own schedule, for reasons
internal to that deployment. Perturbing each method's own bank is the faithful
simulation of that situation. A shared bank would be the distortion, not the fair
comparison, because it evaluates every method under a configuration none of them would
have deployed.

\section{Perturbation Examples}
\label{app:examples}

Tables~\ref{tab:ex-t1}--\ref{tab:ex-t3} give one representative original/rewritten pair
per content-level axis for each \textsc{SkillRL}/\textsc{ReSkill} setting, exported directly
from the banks used in evaluation without manual editing, so that a reader can judge whether
the rewrites are fair. To
make each edit visible at a glance, rewritten or inserted text is marked in
\chg{red bold} and text that the operator \del{removed} is struck through. T4
(ecosystem) changes the \emph{composition} of the bank---skills are added or removed
wholesale, with no one-to-one text correspondence---and T5 (autonomy) removes the bank
entirely, so neither has a before/after text pair to show.

\begin{table}[t]
\caption{T1 (semantic): synonymous rewriting. Wording changes; the prescribed behavior
does not.}
\label{tab:ex-t1}
\begin{center}
\footnotesize
\begin{tabular}{@{}p{2.0cm}p{5.1cm}p{5.1cm}@{}}
\toprule
Setting (skill, field) & Original & Rewritten \\
\midrule
\textsc{SkillRL}-ALFWorld (\texttt{pic\_002}, principle)
& Whenever a needed object is visible and reachable, immediately take it before moving elsewhere---do not leave targets uncollected.
& \chg{As soon as} an \chg{essential} object is \chg{within sight and accessible}, \chg{grab} it right away before \chg{proceeding to another location}. \chg{Avoid leaving any intended items behind.} \\
\addlinespace
\textsc{SkillRL}-WebShop (\texttt{foo\_004}, when\_to\_apply)
& After selecting variants and just before initiating checkout.
& \chg{Prior to starting} the checkout \chg{process}, \chg{once variants have been chosen}. \\
\addlinespace
\textsc{ReSkill}-ALFWorld (\texttt{skill\_d231e3}, action)
& Examine the item to ensure correct identification before moving it to the target location.
& \chg{Verify} the item's \chg{identity by examining it} before \chg{relocating} it to the \chg{designated spot}. \\
\addlinespace
\textsc{ReSkill}-ScienceWorld (\texttt{skill\_209e7c}, action)
& After verifying results, immediately place the object in the correct box to prevent unnecessary repetition.
& \chg{Following the confirmation of} results, \chg{promptly move} the object to the \chg{appropriate} box to \chg{avoid redundant actions}. \\
\bottomrule
\end{tabular}
\end{center}
\end{table}

\begin{table}[t]
\caption{T2 (sufficiency): a skill stays on topic but no longer suffices, either because
a step is deleted (struck through in the original) or because irrelevant material is
injected (marked in the rewrite).}
\label{tab:ex-t2}
\begin{center}
\footnotesize
\begin{tabular}{@{}p{2.0cm}p{5.1cm}p{5.1cm}@{}}
\toprule
Setting (skill, field, operator) & Original & Rewritten \\
\midrule
\textsc{SkillRL}-ALFWorld (\texttt{loo\_005}, principle, delete)
& Avoid toggling the same lamp or re-examining the same empty spot; \del{if no progress after one full check, relocate to a new plausible lamp location.}
& Avoid toggling the same lamp or re-examining the same empty spot. \\
\addlinespace
\textsc{SkillRL}-WebShop (\texttt{dyn\_015}, principle, inject)
& Relax non-essential constraints in the query based on context when early attempts yield zero or irrelevant results.
& Relax non-essential constraints in the query based on context when early attempts yield zero or irrelevant results. \chg{Consider the time of day you are searching, as it may affect product availability, and keep in mind the color of the product, even if it is not a requirement.} \\
\addlinespace
\textsc{ReSkill}-ALFWorld (\texttt{skill\_059fd6}, action, delete)
& \del{Always heat the item using a microwave or stoveburner before} placing it anywhere.
& \chg{Place} the item anywhere. \\
\addlinespace
\textsc{ReSkill}-ScienceWorld (\texttt{skill\_668201}, action, inject)
& Perform a conductivity test after connecting components \del{to confirm the circuit is complete.}
& Perform a conductivity test after connecting components, \chg{and consider noting the color of the wires used.} \\
\bottomrule
\end{tabular}
\end{center}
\end{table}

\begin{table}[t]
\caption{T3 (misleading): the strategy a skill prescribes is replaced by an incorrect
one. Shown at the stronger of the two strengths, which is the setting reported as T3 in
the main text for WebShop and ScienceWorld.}
\label{tab:ex-t3}
\begin{center}
\footnotesize
\begin{tabular}{@{}p{2.0cm}p{5.1cm}p{5.1cm}@{}}
\toprule
Setting (skill, field) & Original & Rewritten \\
\midrule
\textsc{SkillRL}-ALFWorld (\texttt{gen\_008}, principle)
& Record which locations and containers have been fully inspected; do not revisit them unless new evidence suggests their contents changed.
& \chg{Always opt for the first plausible location or container you encounter}, assuming it holds the target object, and \chg{never backtrack or verify even if the task fails}. \\
\addlinespace
\textsc{SkillRL}-WebShop (\texttt{gen\_011}, principle)
& After exhaustive search and reasonable refinements yield no matching product, stop and report failure instead of endless retries.
& \chg{Quickly choose the first result that seems remotely relevant} and proceed to buy, \chg{even if it does not fully match} the user's specifications or constraints. \\
\addlinespace
\textsc{ReSkill}-ALFWorld (\texttt{skill\_2a1e35}, action)
& Ensure the item is cooled using the fridge before moving or placing it anywhere. Do not skip this step.
& Move the item directly to any available container \chg{without cooling it first}. \chg{Skip any cooling step.} \\
\addlinespace
\textsc{ReSkill}-ScienceWorld (\texttt{skill\_e22fd6}, action)
& Complete the circuit, verify conductivity, then decide placement to ensure accuracy.
& \chg{Assume conductivity after a single connection} and proceed with placement, \chg{prioritizing speed over accuracy}. \\
\bottomrule
\end{tabular}
\end{center}
\end{table}

\section{Experimental Configuration}
\label{app:expconfig}

This appendix expands the setup summarized in Section~\ref{ssec:results-empirics}.

\paragraph{Method and benchmark diversity.}
The four methods spread across Table~\ref{tab:framework} rather than clustering in one corner
of it: they differ in the update schedule, the bank growth policy, the admission rule and the
injection mechanism, sharing only the current-policy selection bias abstracted by
Assumption~\ref{as:accept}. The benchmarks are equally spread, spanning embodied interaction
(ALFWorld), shopping (WebShop), scientific procedures (ScienceWorld), open-web navigation
(Mind2Web) and competition mathematics (AMC23); the last also tests whether the pattern extends
from action procedures to non-interactive reasoning strategies, so the trend is not an artifact
of sequential control.

\paragraph{Backbones.}
Each method is run on the base model used in its experimental configuration:
Qwen2.5-7B-Instruct for \textsc{SkillRL} and \textsc{Evolving-RL}, and
Qwen3-4B-Instruct-2507 for \textsc{ReSkill} and \textsc{ARISE}. Because the backbones
differ, an absolute success rate characterizes a method together with its backbone rather
than the method in isolation.

\paragraph{Evaluation sets.}
Evaluation uses the standard split of each benchmark---140 tasks on ALFWorld, 500 on
WebShop, 155 validation points on ScienceWorld, and the standard evaluation splits of
Mind2Web and DeepScaleR-AMC23---held fixed across every perturbation axis and every anchor,
so the conditions being compared are paired at the level of individual tasks. The released
configuration records the exact identifiers and sample counts for all six settings.
Checkpoints are saved at a regular interval along each training trajectory; the exact anchor
steps are shown in the corresponding fragility curves and verified full-results tables.

\paragraph{Repeats and error bars.}
Every anchor is evaluated repeatedly under six conditions---unperturbed and T1--T5---and
reported with mean $\pm 1$ s.d. The \textsc{SkillRL} and \textsc{ReSkill} settings use three
repeats, and \textsc{ReSkill}-ScienceWorld uses five; repeat counts for \textsc{Evolving-RL}
and \textsc{ARISE} are recorded in the released configuration. Repeats share the policy
checkpoint and the same fixed set of perturbed banks and differ only in the rollout sampling
seed. The delivered-pair report cards use the paired runs required to compute $\CR$, $\CR_0$
and $\RR$. Each setting comes from a single co-evolution run, so the bands quantify
evaluation noise, not variability across training seeds.

\paragraph{Misleading-axis strength.}
For the four \textsc{SkillRL}/\textsc{ReSkill} settings, the misleading axis (T3) is applied
at one strength per setting: the milder variant on the two ALFWorld settings and the stronger
variant on WebShop and ScienceWorld. Both strengths were run in each of these four settings,
and the one not reported in the main text is included in the full-results tables below.

\section{Full Results of the Co-evolution Fragility Curves}
\label{app:results}

\subsection{Per-Method, Per-Environment, Per-Axis Tables}
\label{app:results-tables}

Tables~\ref{tab:full-skillrl-alfworld}--\ref{tab:full-arise-amc23} give the raw success
rates behind all six settings in Section~\ref{sec:empirics}: one row per anchor and one column
per condition. Every table reports the mean $\pm 1$ s.d.\ over repeated evaluations. The
perturbed banks are fixed across anchors, so reading down a column shows what a single fixed
corruption costs a progressively more co-evolved policy. Small endpoint differences between
the report card and these grids reflect repeat-level evaluation variation. All headline audit
quantities use the report-card values.

Across the eighteen content-level axis--setting pairs, twelve OLS slopes of $\Drop$ are
significantly positive---four on each of T1, T2 and T3. At least one axis is significant in
five of six settings; \textsc{SkillRL}-WebShop is the sole exception. For T3 specifically,
the slope is significant on \textsc{SkillRL}-ALFWorld ($p<10^{-6}$),
\textsc{ReSkill}-ScienceWorld ($p=0.021$), \textsc{ReSkill}-ALFWorld ($p=0.001$) and
\textsc{Evolving-RL}-Mind2Web ($p=0.041$), but not on \textsc{SkillRL}-WebShop
($p=0.351$) or \textsc{ARISE}-AMC23 ($p=0.175$).

\begin{table}[t]
\caption{Full success-rate grid for \textsc{SkillRL} on ALFWorld: every anchor on the
rising segment, under the unperturbed bank and all five perturbation axes.
Cells report mean $\pm$ 1 s.d.\ over 3 evaluation repeats (\%).
The perturbed variants are fixed across anchors; only the policy checkpoint
varies.}
\label{tab:full-skillrl-alfworld}
\begin{center}
\scalebox{0.95}{
\begin{tabular}{lcccccc}
\toprule
Anchor & Unperturbed & T1 & T2 & T3 & T4 & T5 \\
\midrule
\texttt{s}0 & 5.2$_{\pm1.5}$ & 5.7$_{\pm2.5}$ & 5.0$_{\pm1.2}$ & 12.1$_{\pm1.7}$ & 3.6$_{\pm1.7}$ & 21.7$_{\pm1.2}$ \\
\texttt{s}5 & 2.1$_{\pm0.0}$ & 2.1$_{\pm0.6}$ & 2.6$_{\pm0.3}$ & 7.1$_{\pm1.5}$ & 4.3$_{\pm1.2}$ & 18.8$_{\pm1.9}$ \\
\texttt{s}10 & 7.9$_{\pm2.1}$ & 11.0$_{\pm0.7}$ & 7.9$_{\pm2.1}$ & 14.8$_{\pm0.3}$ & 10.5$_{\pm1.8}$ & 29.0$_{\pm1.2}$ \\
\texttt{s}15 & 11.0$_{\pm2.4}$ & 17.6$_{\pm1.2}$ & 10.5$_{\pm0.3}$ & 17.1$_{\pm1.5}$ & 18.3$_{\pm2.9}$ & 26.4$_{\pm1.5}$ \\
\texttt{s}20 & 21.4$_{\pm0.6}$ & 27.9$_{\pm1.5}$ & 21.4$_{\pm0.0}$ & 24.5$_{\pm1.2}$ & 29.5$_{\pm3.0}$ & 27.9$_{\pm2.1}$ \\
\texttt{s}25 & 13.3$_{\pm2.6}$ & 22.6$_{\pm3.3}$ & 11.9$_{\pm3.2}$ & 20.0$_{\pm0.6}$ & 21.7$_{\pm2.6}$ & 23.3$_{\pm2.2}$ \\
\texttt{s}30 & 29.3$_{\pm2.0}$ & 29.0$_{\pm1.5}$ & 28.6$_{\pm4.0}$ & 33.1$_{\pm2.4}$ & 33.8$_{\pm3.0}$ & 42.4$_{\pm0.9}$ \\
\texttt{s}35 & 38.1$_{\pm3.9}$ & 34.5$_{\pm1.8}$ & 33.8$_{\pm2.2}$ & 41.7$_{\pm3.4}$ & 38.6$_{\pm2.1}$ & 41.9$_{\pm1.5}$ \\
\texttt{s}40 & 55.2$_{\pm0.9}$ & 50.7$_{\pm1.5}$ & 48.8$_{\pm3.7}$ & 52.6$_{\pm1.2}$ & 55.7$_{\pm1.5}$ & 46.0$_{\pm2.8}$ \\
\texttt{s}45 & 45.5$_{\pm2.0}$ & 42.6$_{\pm2.6}$ & 46.0$_{\pm0.7}$ & 46.7$_{\pm0.9}$ & 41.2$_{\pm3.2}$ & 39.3$_{\pm2.0}$ \\
\texttt{s}50 & 59.8$_{\pm2.8}$ & 52.9$_{\pm3.5}$ & 59.0$_{\pm4.4}$ & 61.0$_{\pm2.4}$ & 57.9$_{\pm4.1}$ & 44.8$_{\pm2.9}$ \\
\texttt{s}55 & 62.6$_{\pm1.2}$ & 56.4$_{\pm4.1}$ & 63.8$_{\pm1.5}$ & 63.8$_{\pm1.8}$ & 65.0$_{\pm2.1}$ & 51.0$_{\pm1.5}$ \\
\texttt{s}60 & 71.2$_{\pm2.0}$ & 65.4$_{\pm2.4}$ & 67.0$_{\pm0.7}$ & 71.1$_{\pm2.9}$ & 67.3$_{\pm2.1}$ & 61.4$_{\pm1.0}$ \\
\texttt{s}70 & 83.3$_{\pm2.4}$ & 75.1$_{\pm1.2}$ & 76.0$_{\pm1.2}$ & 76.3$_{\pm0.6}$ & 72.5$_{\pm1.8}$ & 63.6$_{\pm2.1}$ \\
\texttt{s}80 & 88.3$_{\pm1.2}$ & 80.0$_{\pm2.2}$ & 79.1$_{\pm1.3}$ & 78.9$_{\pm2.0}$ & 74.7$_{\pm2.3}$ & 71.9$_{\pm2.0}$ \\
\texttt{s}90 & 88.8$_{\pm2.1}$ & 77.6$_{\pm0.9}$ & 78.6$_{\pm1.0}$ & 77.1$_{\pm2.9}$ & 77.6$_{\pm2.2}$ & 73.3$_{\pm2.8}$ \\
\bottomrule
\end{tabular}}
\end{center}
\end{table}

\begin{table}[t]
\caption{Full success-rate grid for \textsc{SkillRL} on WebShop: every anchor on the
rising segment, under the unperturbed bank and all five perturbation axes.
Cells report mean $\pm$ 1 s.d.\ over 3 evaluation repeats (\%).
The perturbed variants are fixed across anchors; only the policy checkpoint
varies.}
\label{tab:full-skillrl-webshop}
\begin{center}
\scalebox{0.95}{
\begin{tabular}{lcccccc}
\toprule
Anchor & Unperturbed & T1 & T2 & T3 & T4 & T5 \\
\midrule
\texttt{s}0 & 19.2$_{\pm3.4}$ & 25.3$_{\pm2.2}$ & 17.1$_{\pm0.8}$ & 25.9$_{\pm1.3}$ & 20.3$_{\pm1.5}$ & 20.3$_{\pm0.7}$ \\
\texttt{s}10 & 33.9$_{\pm0.9}$ & 38.3$_{\pm1.9}$ & 34.9$_{\pm2.0}$ & 31.2$_{\pm1.4}$ & 33.9$_{\pm2.8}$ & 30.4$_{\pm1.9}$ \\
\texttt{s}20 & 38.3$_{\pm1.2}$ & 40.3$_{\pm1.6}$ & 39.5$_{\pm1.7}$ & 37.7$_{\pm1.7}$ & 39.2$_{\pm0.6}$ & 33.5$_{\pm2.0}$ \\
\texttt{s}30 & 46.4$_{\pm2.0}$ & 44.5$_{\pm2.7}$ & 45.9$_{\pm1.3}$ & 41.7$_{\pm0.9}$ & 44.1$_{\pm3.2}$ & 39.3$_{\pm0.6}$ \\
\texttt{s}40 & 48.4$_{\pm0.8}$ & 51.5$_{\pm1.8}$ & 49.3$_{\pm2.5}$ & 47.9$_{\pm3.6}$ & 48.7$_{\pm2.3}$ & 41.6$_{\pm0.7}$ \\
\texttt{s}50 & 47.3$_{\pm1.8}$ & 46.7$_{\pm1.2}$ & 46.7$_{\pm0.6}$ & 45.0$_{\pm2.0}$ & 47.7$_{\pm2.3}$ & 37.3$_{\pm0.5}$ \\
\texttt{s}70 & 52.4$_{\pm1.1}$ & 51.5$_{\pm1.2}$ & 48.9$_{\pm1.5}$ & 51.6$_{\pm1.6}$ & 52.4$_{\pm0.6}$ & 45.9$_{\pm0.7}$ \\
\bottomrule
\end{tabular}}
\end{center}
\end{table}

\begin{table}[t]
\caption{Full success-rate grid for \textsc{ReSkill} on ScienceWorld: every anchor on the
rising segment, under the unperturbed bank and all five perturbation axes.
Cells report mean $\pm$ 1 s.d.\ over 5 evaluation repeats (\%).
The perturbed variants are fixed across anchors; only the policy checkpoint
varies.}
\label{tab:full-reskill-scienceworld}
\begin{center}
\scalebox{0.95}{
\begin{tabular}{lcccccc}
\toprule
Anchor & Unperturbed & T1 & T2 & T3 & T4 & T5 \\
\midrule
\texttt{s}60 & 28.5$_{\pm1.8}$ & 13.3$_{\pm2.5}$ & 32.6$_{\pm1.4}$ & 45.4$_{\pm1.8}$ & 21.2$_{\pm1.7}$ & 45.8$_{\pm1.9}$ \\
\texttt{s}120 & 36.3$_{\pm3.0}$ & 37.9$_{\pm2.1}$ & 22.6$_{\pm3.0}$ & 41.8$_{\pm1.6}$ & 48.5$_{\pm3.5}$ & 36.8$_{\pm3.3}$ \\
\texttt{s}180 & 49.0$_{\pm0.6}$ & 47.0$_{\pm2.1}$ & 47.7$_{\pm1.1}$ & 49.4$_{\pm1.0}$ & 49.3$_{\pm1.2}$ & 32.6$_{\pm3.2}$ \\
\texttt{s}240 & 36.4$_{\pm1.4}$ & 2.7$_{\pm1.7}$ & 17.3$_{\pm2.1}$ & 32.3$_{\pm2.6}$ & 46.1$_{\pm1.3}$ & 32.1$_{\pm1.5}$ \\
\texttt{s}300 & 50.3$_{\pm0.0}$ & 40.1$_{\pm2.1}$ & 44.5$_{\pm1.2}$ & 46.2$_{\pm0.3}$ & 48.0$_{\pm0.9}$ & 33.9$_{\pm1.7}$ \\
\bottomrule
\end{tabular}}
\end{center}
\end{table}

\begin{table}[t]
\caption{Full success-rate grid for \textsc{ReSkill} on ALFWorld: every anchor on the
rising segment, under the unperturbed bank and all five perturbation axes.
Cells report mean $\pm$ 1 s.d.\ over 3 evaluation repeats (\%).
The perturbed variants are fixed across anchors; only the policy checkpoint
varies.}
\label{tab:full-reskill-alfworld}
\begin{center}
\scalebox{0.95}{
\begin{tabular}{lcccccc}
\toprule
Anchor & Unperturbed & T1 & T2 & T3 & T4 & T5 \\
\midrule
\texttt{s}30 & 20.0$_{\pm2.3}$ & 21.2$_{\pm1.2}$ & 25.2$_{\pm1.5}$ & 26.2$_{\pm1.8}$ & 23.3$_{\pm2.6}$ & 33.6$_{\pm3.2}$ \\
\texttt{s}60 & 33.8$_{\pm1.8}$ & 36.4$_{\pm0.6}$ & 34.8$_{\pm0.9}$ & 34.8$_{\pm2.2}$ & 35.7$_{\pm0.6}$ & 37.1$_{\pm2.1}$ \\
\texttt{s}90 & 37.6$_{\pm0.7}$ & 35.7$_{\pm0.6}$ & 35.0$_{\pm1.2}$ & 39.3$_{\pm2.0}$ & 38.8$_{\pm4.4}$ & 41.2$_{\pm3.5}$ \\
\texttt{s}120 & 41.4$_{\pm1.0}$ & 40.5$_{\pm1.2}$ & 40.7$_{\pm0.6}$ & 46.0$_{\pm1.2}$ & 46.2$_{\pm2.4}$ & 46.7$_{\pm1.2}$ \\
\texttt{s}150 & 54.8$_{\pm3.8}$ & 52.9$_{\pm1.0}$ & 52.6$_{\pm0.7}$ & 55.0$_{\pm1.5}$ & 54.0$_{\pm0.7}$ & 56.4$_{\pm1.0}$ \\
\texttt{s}180 & 61.0$_{\pm1.5}$ & 60.7$_{\pm1.0}$ & 59.8$_{\pm0.9}$ & 61.9$_{\pm1.2}$ & 60.0$_{\pm0.6}$ & 58.8$_{\pm0.3}$ \\
\texttt{s}210 & 40.0$_{\pm0.3}$ & 36.6$_{\pm0.3}$ & 38.4$_{\pm0.7}$ & 39.3$_{\pm0.3}$ & 39.8$_{\pm1.5}$ & 38.6$_{\pm0.9}$ \\
\texttt{s}240 & 26.9$_{\pm0.7}$ & 28.1$_{\pm0.3}$ & 27.4$_{\pm0.3}$ & 29.3$_{\pm1.2}$ & 28.6$_{\pm0.6}$ & 28.6$_{\pm1.0}$ \\
\texttt{s}270 & 41.4$_{\pm0.7}$ & 41.1$_{\pm1.5}$ & 40.4$_{\pm1.0}$ & 42.6$_{\pm0.6}$ & 38.8$_{\pm0.7}$ & 41.9$_{\pm1.5}$ \\
\texttt{s}300 & 56.1$_{\pm0.6}$ & 50.7$_{\pm0.0}$ & 51.2$_{\pm0.3}$ & 50.5$_{\pm0.3}$ & 49.8$_{\pm0.3}$ & 48.5$_{\pm0.3}$ \\
\texttt{s}330 & 71.5$_{\pm0.3}$ & 62.2$_{\pm1.7}$ & 61.0$_{\pm0.9}$ & 63.0$_{\pm0.9}$ & 61.3$_{\pm1.5}$ & 64.4$_{\pm1.3}$ \\
\texttt{s}360 & 76.9$_{\pm0.6}$ & 64.6$_{\pm0.9}$ & 65.4$_{\pm2.3}$ & 63.3$_{\pm2.7}$ & 59.4$_{\pm0.9}$ & 62.0$_{\pm1.2}$ \\
\bottomrule
\end{tabular}}
\end{center}
\end{table}

\begin{table}[t]
\caption{Full action-success grid for \textsc{Evolving-RL} on Mind2Web: every anchor on the
rising segment, under the unperturbed 200-skill bank and all five perturbation axes. Cells
report mean $\pm$ 1 s.d.\ over evaluation repeats. The perturbed variants are fixed across
anchors; only the policy checkpoint varies. T5 is the no-skill arm.}
\label{tab:full-evolvingrl-mind2web}
\begin{center}
\scalebox{0.95}{
\begin{tabular}{lcccccc}
\toprule
Anchor & Unperturbed & T1 & T2 & T3 & T4 & T5 \\
\midrule
\texttt{s}9  & 16.4$_{\pm0.9}$ & 17.3$_{\pm1.0}$ & 21.1$_{\pm1.4}$ & 15.4$_{\pm0.9}$ & 18.1$_{\pm1.0}$ & 14.0$_{\pm1.1}$ \\
\texttt{s}14 & 16.3$_{\pm0.6}$ & 16.4$_{\pm1.0}$ & 20.5$_{\pm1.1}$ & 17.7$_{\pm1.3}$ & 17.8$_{\pm0.5}$ & 13.0$_{\pm0.7}$ \\
\texttt{s}19 & 17.1$_{\pm0.5}$ & 16.5$_{\pm1.3}$ & 17.1$_{\pm1.2}$ & 20.1$_{\pm0.5}$ & 16.5$_{\pm1.5}$ & 15.8$_{\pm1.5}$ \\
\texttt{s}24 & 16.8$_{\pm1.1}$ & 18.4$_{\pm1.1}$ & 16.3$_{\pm0.6}$ & 18.5$_{\pm0.4}$ & 16.1$_{\pm1.0}$ & 15.0$_{\pm0.5}$ \\
\texttt{s}29 & 18.1$_{\pm0.6}$ & 17.5$_{\pm0.7}$ & 18.9$_{\pm0.4}$ & 19.0$_{\pm0.9}$ & 17.1$_{\pm0.9}$ & 13.7$_{\pm1.3}$ \\
\texttt{s}34 & 19.2$_{\pm1.0}$ & 18.6$_{\pm1.1}$ & 19.3$_{\pm1.0}$ & 17.9$_{\pm1.1}$ & 16.1$_{\pm0.9}$ & 11.2$_{\pm0.7}$ \\
\texttt{s}39 & 19.8$_{\pm1.5}$ & 15.7$_{\pm1.5}$ & 19.4$_{\pm1.3}$ & 20.5$_{\pm1.2}$ & 14.7$_{\pm0.8}$ & 11.0$_{\pm0.7}$ \\
\texttt{s}44 & 21.9$_{\pm0.7}$ & 19.8$_{\pm0.5}$ & 17.4$_{\pm1.2}$ & 22.4$_{\pm0.8}$ & 19.9$_{\pm1.3}$ & 14.1$_{\pm0.8}$ \\
\texttt{s}49 & 23.6$_{\pm1.5}$ & 21.6$_{\pm1.3}$ & 19.0$_{\pm0.4}$ & 20.1$_{\pm0.6}$ & 19.9$_{\pm1.4}$ & 13.6$_{\pm0.9}$ \\
\texttt{s}54 & 25.2$_{\pm1.5}$ & 23.2$_{\pm0.8}$ & 20.3$_{\pm0.5}$ & 20.4$_{\pm1.1}$ & 20.5$_{\pm1.3}$ & 15.8$_{\pm0.7}$ \\
\bottomrule
\end{tabular}}
\end{center}
\end{table}

\begin{table}[t]
\caption{Full success-rate grid for \textsc{ARISE} on AMC23: every anchor on the analyzed
segment, under the unperturbed 110-skill bank (step 2000) and all five perturbation axes.
Cells report mean $\pm$ 1 s.d.\ over evaluation repeats. The perturbed variants are fixed
across anchors; only the policy checkpoint varies. T5 is the no-skill arm.}
\label{tab:full-arise-amc23}
\begin{center}
\scalebox{0.95}{
\begin{tabular}{lcccccc}
\toprule
Anchor & Unperturbed & T1 & T2 & T3 & T4 & T5 \\
\midrule
\texttt{s}100  & 15.5$_{\pm0.8}$ & 16.0$_{\pm0.6}$ & 16.0$_{\pm1.1}$ & 16.8$_{\pm0.5}$ & 17.2$_{\pm1.0}$ & 17.2$_{\pm0.8}$ \\
\texttt{s}200  & 17.5$_{\pm0.5}$ & 18.6$_{\pm1.0}$ & 18.9$_{\pm0.4}$ & 16.9$_{\pm0.9}$ & 18.3$_{\pm0.5}$ & 19.4$_{\pm0.5}$ \\
\texttt{s}300  & 20.0$_{\pm0.9}$ & 20.0$_{\pm1.3}$ & 20.0$_{\pm0.5}$ & 20.0$_{\pm0.7}$ & 20.6$_{\pm1.1}$ & 20.9$_{\pm1.4}$ \\
\texttt{s}400  & 22.5$_{\pm1.0}$ & 23.6$_{\pm0.8}$ & 22.9$_{\pm1.5}$ & 23.9$_{\pm0.5}$ & 25.0$_{\pm1.3}$ & 23.9$_{\pm0.7}$ \\
\texttt{s}500  & 26.7$_{\pm0.6}$ & 23.3$_{\pm0.5}$ & 25.8$_{\pm0.7}$ & 27.1$_{\pm1.3}$ & 25.8$_{\pm0.6}$ & 27.1$_{\pm1.0}$ \\
\texttt{s}600  & 31.7$_{\pm1.1}$ & 30.0$_{\pm0.8}$ & 31.5$_{\pm1.0}$ & 32.5$_{\pm0.5}$ & 34.0$_{\pm0.5}$ & 32.5$_{\pm0.6}$ \\
\texttt{s}700  & 55.0$_{\pm1.2}$ & 53.3$_{\pm0.9}$ & 50.8$_{\pm0.8}$ & 54.2$_{\pm1.0}$ & 52.5$_{\pm0.9}$ & 52.5$_{\pm0.7}$ \\
\texttt{s}800  & 71.9$_{\pm1.3}$ & 68.8$_{\pm1.2}$ & 66.7$_{\pm0.7}$ & 72.9$_{\pm1.0}$ & 61.5$_{\pm1.0}$ & 67.7$_{\pm1.4}$ \\
\texttt{s}900  & 78.4$_{\pm1.2}$ & 75.0$_{\pm0.7}$ & 72.7$_{\pm1.5}$ & 79.5$_{\pm0.5}$ & 67.0$_{\pm0.9}$ & 73.9$_{\pm1.2}$ \\
\texttt{s}1000 & 87.4$_{\pm0.6}$ & 82.5$_{\pm0.9}$ & 81.5$_{\pm0.4}$ & 80.9$_{\pm1.1}$ & 76.5$_{\pm1.2}$ & 75.3$_{\pm1.0}$ \\
\bottomrule
\end{tabular}}
\end{center}
\end{table}

\paragraph{Dips inside the analyzed segment.}
Section~\ref{ssec:dose} retains every anchor up to the peak, dips included; this is
where they are named. Rating each dip by $|\Delta \SR|$ against the pooled standard
deviation of the two anchors it joins, \textsc{SkillRL}-ALFWorld contributes three
(\texttt{s0}$\to$\texttt{s5}, $-3.1$ points, $2.1\sigma$;
\texttt{s20}$\to$\texttt{s25}, $-8.1$, $3.0\sigma$;
\texttt{s40}$\to$\texttt{s45}, $-9.7$, $4.4\sigma$), \textsc{ReSkill}-ScienceWorld one
(\texttt{s180}$\to$\texttt{s240}, $-12.6$, $8.1\sigma$),
\textsc{ReSkill}-ALFWorld two (\texttt{s180}$\to$\texttt{s210}, $-21.0$,
$13.7\sigma$; and \texttt{s210}$\to$\texttt{s240}, $-13.1$, $17.2\sigma$, after which the
run recovers and sets its highest score at the final anchor), \textsc{SkillRL}-WebShop one
that is indistinguishable
from evaluation noise (\texttt{s40}$\to$\texttt{s50}, $-1.1$, $0.6\sigma$), and
\textsc{Evolving-RL}-Mind2Web two equally small reversals (\texttt{s9}$\to$\texttt{s14},
$-0.1$, $0.1\sigma$; \texttt{s19}$\to$\texttt{s24}, $-0.3$, $0.2\sigma$).
\textsc{ARISE}-AMC23 is monotone at the saved anchors. We retain every dip: the dose is the
amount of co-evolution a run has undergone, and a run that briefly loses ground has still
undergone it. No run in the six settings peaks and then stays down, so the clause of
Section~\ref{ssec:dose} that discards post-peak checkpoints never has to be applied.
Throughout, a standard deviation is the sample standard deviation over the individual
evaluation repeats.

\paragraph{The same grids as heatmaps.}
Figures~\ref{fig:heat-skillrl-alfworld}--\ref{fig:heat-arise-amc23} display the derived
quantities of Eqs.~\eqref{eq:drop}--\eqref{eq:rp} over the full anchor range. Each figure has
a $\Drop$ panel and an $\RP$ panel; the color scales are clipped so that the degradation
regime remains legible, because the earliest anchors reach values an order of magnitude
larger than the rest and would otherwise flatten the whole map into a single shade. Every
cell is derived from the per-anchor means in the corresponding full-grid table, and AVG is
the mean over the three scored content-level axes (T1--T3). The transition from blue on the
left to white and red on the right is the dependence forming: perturbations that help a
policy which has not yet co-evolved to the bank stop helping, and then start hurting.

\begin{figure}[t]
\begin{center}
\includegraphics[width=\textwidth]{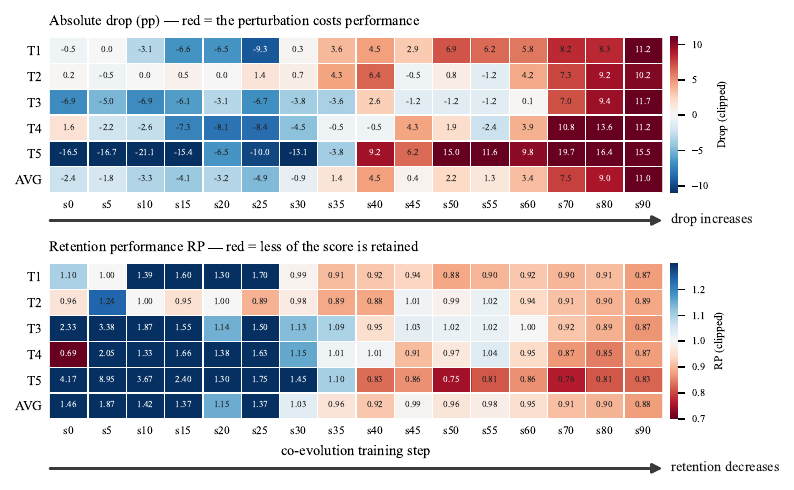}
\end{center}
\caption{Perturbation response across all checkpoints, \textsc{SkillRL} on ALFWorld.
Top: $\Drop$ in percentage points. Bottom: retention performance $\RP$. Color scales
clipped; cell values exact.}
\label{fig:heat-skillrl-alfworld}
\end{figure}

\begin{figure}[t]
\begin{center}
\includegraphics[width=\textwidth]{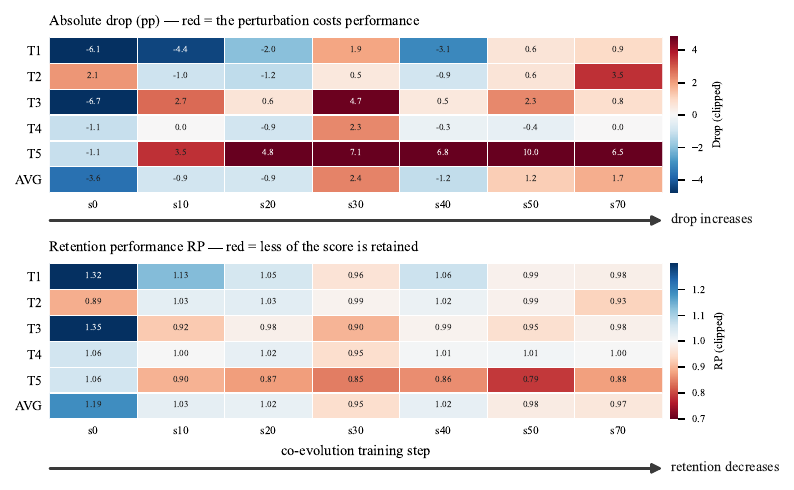}
\end{center}
\caption{Perturbation response across all checkpoints, \textsc{SkillRL} on WebShop.
Panels and conventions as in Figure~\ref{fig:heat-skillrl-alfworld}.}
\label{fig:heat-skillrl-webshop}
\end{figure}

\begin{figure}[t]
\begin{center}
\includegraphics[width=\textwidth]{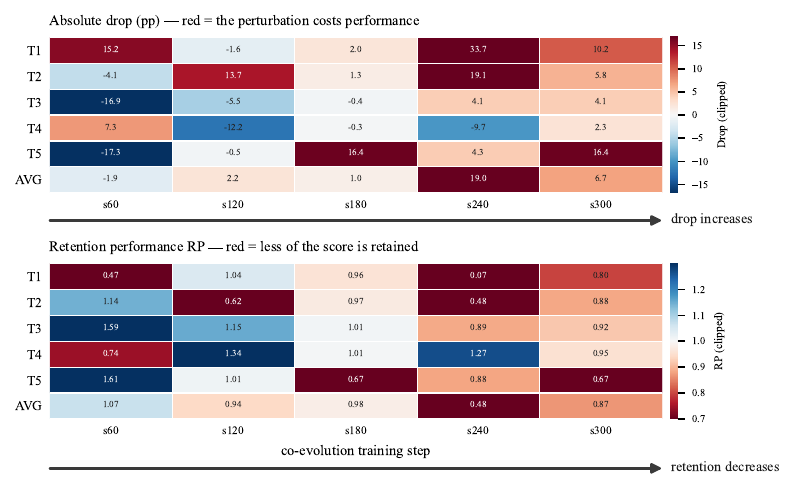}
\end{center}
\caption{Perturbation response across all checkpoints, \textsc{ReSkill} on
ScienceWorld. Panels and conventions as in Figure~\ref{fig:heat-skillrl-alfworld}.}
\label{fig:heat-reskill-scienceworld}
\end{figure}

\begin{figure}[t]
\begin{center}
\includegraphics[width=\textwidth]{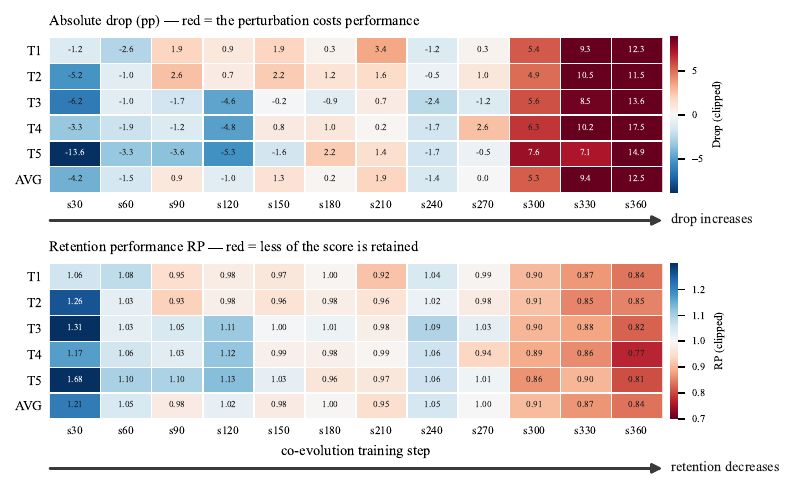}
\end{center}
\caption{Perturbation response across all checkpoints, \textsc{ReSkill} on ALFWorld.
Panels and conventions as in Figure~\ref{fig:heat-skillrl-alfworld}.}
\label{fig:heat-reskill-alfworld}
\end{figure}

\begin{figure}[t]
\begin{center}
\includegraphics[width=\textwidth]{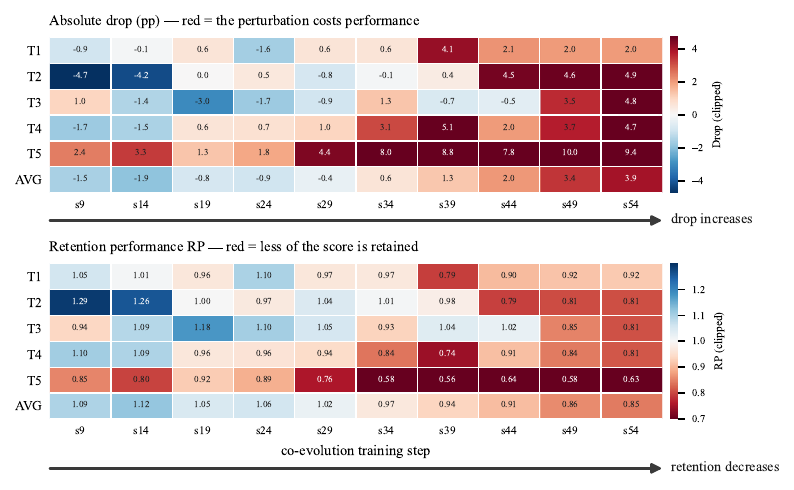}
\end{center}
\caption{Perturbation response across all checkpoints, \textsc{Evolving-RL} on Mind2Web.
Panels and conventions as in Figure~\ref{fig:heat-skillrl-alfworld}; cell values are derived
from the per-anchor means in Table~\ref{tab:full-evolvingrl-mind2web}.}
\label{fig:heat-evolvingrl-mind2web}
\end{figure}

\begin{figure}[t]
\begin{center}
\includegraphics[width=\textwidth]{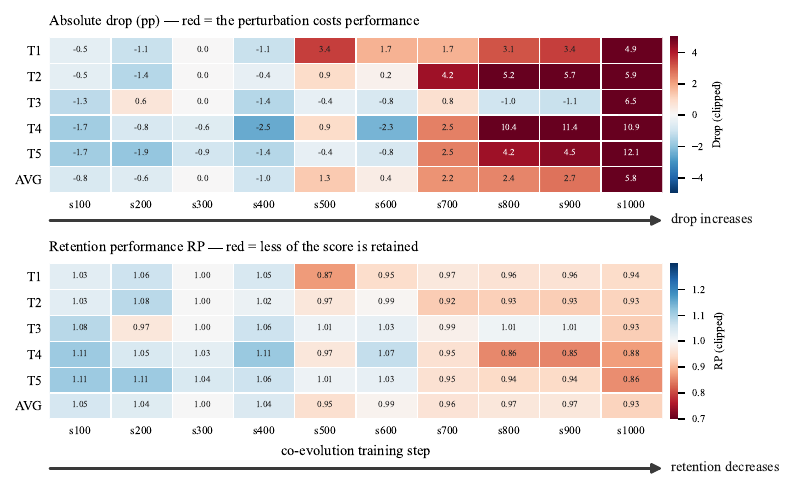}
\end{center}
\caption{Perturbation response across all checkpoints, \textsc{ARISE} on AMC23.
Panels and conventions as in Figure~\ref{fig:heat-skillrl-alfworld}; cell values are derived
from the per-anchor means in Table~\ref{tab:full-arise-amc23}.}
\label{fig:heat-arise-amc23}
\end{figure}

\subsection{Co-evolution Fragility Curves, All Six Settings}
\label{app:dose-all}

Figure~\ref{fig:dose-reskill} in the main text reports \textsc{ReSkill} on ALFWorld.
The other five settings follow in the same layout: \textsc{SkillRL} on WebShop
(Figure~\ref{fig:dose-skillrl}) and ALFWorld
(Figure~\ref{fig:dose-skillrl-alfworld}), \textsc{ReSkill} on ScienceWorld
(Figure~\ref{fig:dose-reskill-scienceworld}), \textsc{Evolving-RL} on Mind2Web
(Figure~\ref{fig:dose-evolvingrl}) and \textsc{ARISE} on AMC23
(Figure~\ref{fig:dose-arise}).

\begin{figure}[t]
\begin{center}
\includegraphics[width=\textwidth]{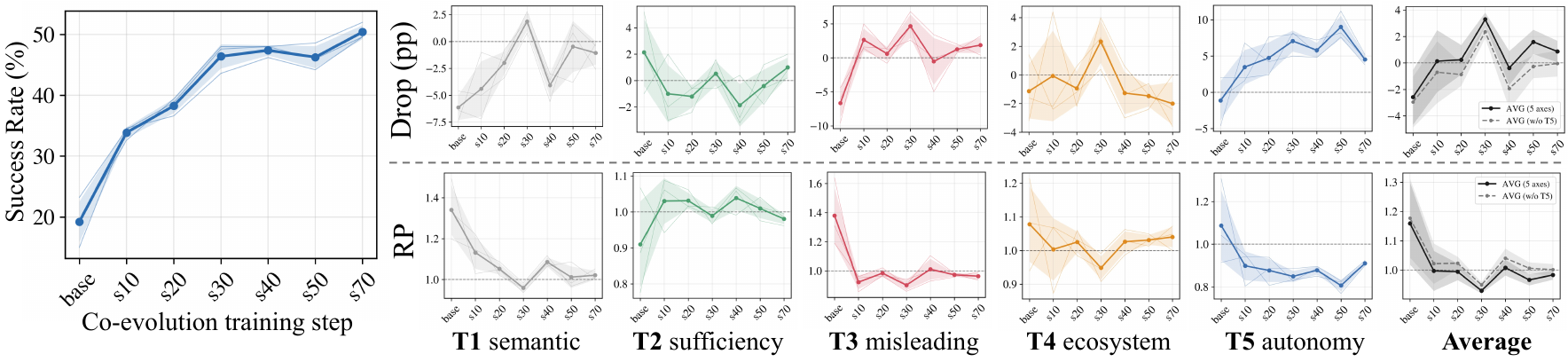}
\end{center}
\caption{Co-evolution fragility curve for \textsc{SkillRL} on WebShop; same layout as
Figure~\ref{fig:dose-reskill}. This is the sole setting in which no content-level drop slope
reaches significance.}
\label{fig:dose-skillrl}
\end{figure}

\begin{figure}[t]
\begin{center}
\includegraphics[width=\textwidth]{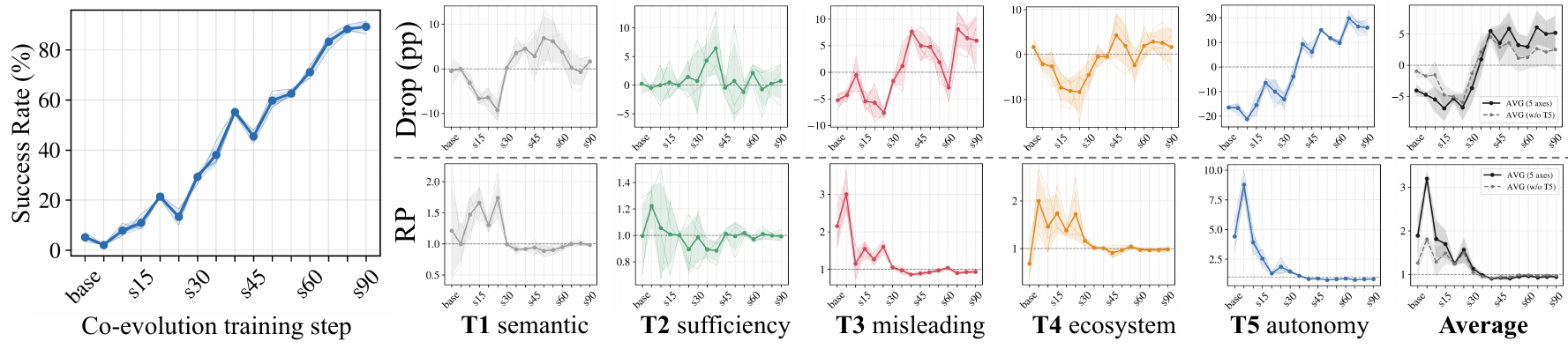}
\end{center}
\caption{Co-evolution fragility curve for \textsc{SkillRL} on ALFWorld; same layout as
Figure~\ref{fig:dose-skillrl}. This setting has the widest competence range of the six,
from $5.2\%$ to $88.8\%$ over sixteen anchors, so the transition is easiest to read
here. Negative drops and $\RP > 1$ at the earliest anchors mean the perturbation
\emph{improved} performance, which is admissible for a policy that has not yet
co-evolved to the bank (Proposition~\ref{prop:peak}, third clause) rather than a
violation of the one-sided prediction. Error bars span $\pm 1$ s.d.\ over three
evaluation repeats that share the policy checkpoint and the same fixed set of rewritten
banks and differ only in the rollout sampling seed (temperature 0.4); because each
setting comes from a single co-evolution run, these bands quantify evaluation noise and
say nothing about variability across training seeds.}
\label{fig:dose-skillrl-alfworld}
\end{figure}

\begin{figure}[t]
\begin{center}
\includegraphics[width=\textwidth]{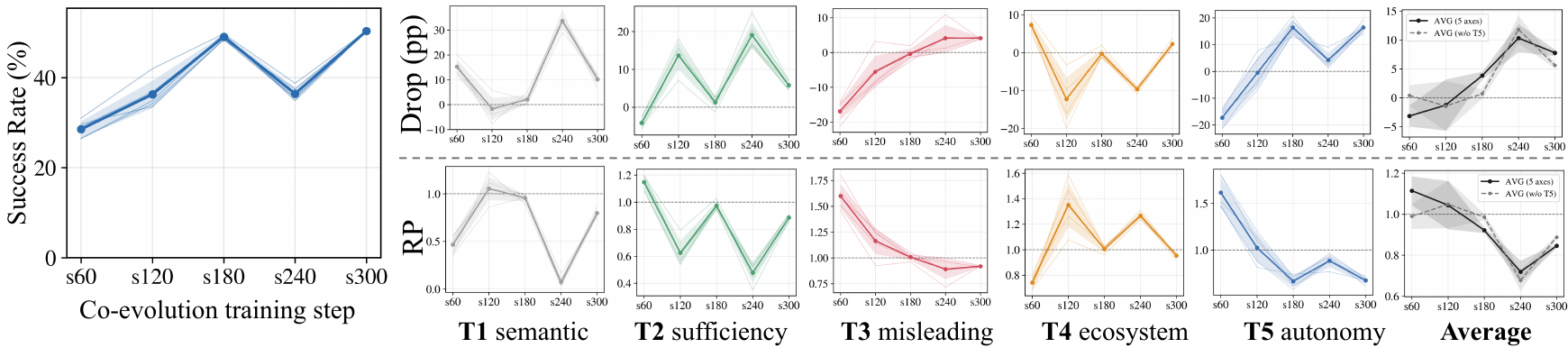}
\end{center}
\caption{Co-evolution fragility curve for \textsc{ReSkill} on ScienceWorld; same layout
as Figure~\ref{fig:dose-reskill}. Error bars span five evaluation repeats, and this setting
has the fewest anchors of the six (five), yielding the widest slope intervals.}
\label{fig:dose-reskill-scienceworld}
\end{figure}

\begin{figure}[t]
\begin{center}
\includegraphics[width=\textwidth]{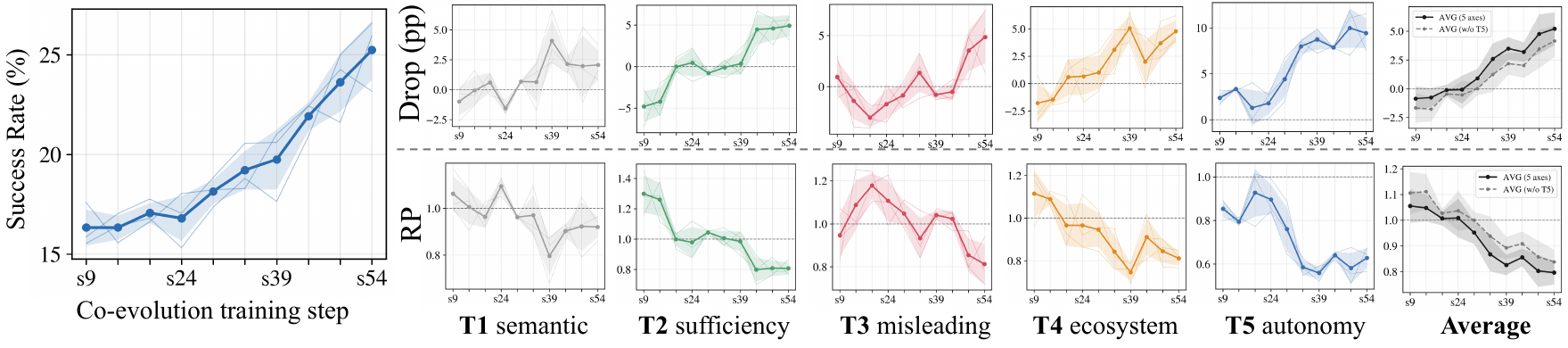}
\end{center}
\caption{Co-evolution fragility curve for \textsc{Evolving-RL} on Mind2Web; same layout
as Figure~\ref{fig:dose-reskill}.}
\label{fig:dose-evolvingrl}
\end{figure}

\begin{figure}[t]
\begin{center}
\includegraphics[width=\textwidth]{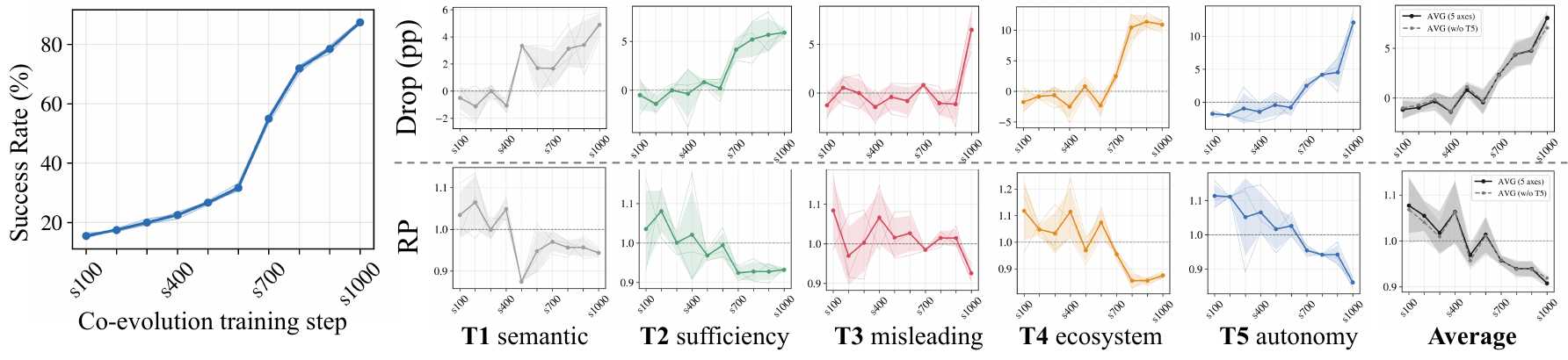}
\end{center}
\caption{Co-evolution fragility curve for \textsc{ARISE} on AMC23; same layout
as Figure~\ref{fig:dose-reskill}.}
\label{fig:dose-arise}
\end{figure}

\subsection{Drop and Retention Are One Measurement on Two Scales}
\label{app:dropvsrp}

The two quantities plotted in every dose figure are algebraically linked. Substituting the
definition of $\Drop_a$ into that of $\RP_a$,
\begin{equation}
\RP_a(t) \;=\; \frac{\SR(\Bankstar_a)}{\SR(\Bankstar)}
\;=\; \frac{\SR(\Bankstar) - \Drop_a(t)}{\SR(\Bankstar)}
\;=\; 1 - \frac{\Drop_a(t)}{\SR(\Bankstar)},
\label{eq:dropvsrp}
\end{equation}
so retention is an affine transform of the drop: a flip about the horizontal axis, rescaled
by $1/\SR(\Bankstar)$. Whenever the unperturbed rate is close to constant across anchors,
the retention row is therefore an almost exact mirror image of the drop row, and the sole
source of divergence between the two is variation in $\SR(\Bankstar)$ itself.

\paragraph{A sign reversal along training.}
On \textsc{SkillRL}-ALFWorld, $\RP_{\text{T3}}=2.33$ at \texttt{s0} and $3.38$ at
\texttt{s5}: a misleading bank more than doubles the score of a policy that has not yet
co-evolved with it. By \texttt{s90}, $\RP_{\text{T3}}$ falls to $0.86$, while T5 moves
from $4.17$ to $0.82$. The same bank that initially obstructs the policy thus becomes a
source of competence and, eventually, dependence.

Across the six repeatedly evaluated settings,
Figure~\ref{fig:dose-reskill-scienceworld} spans a relatively narrow competence range
($28.5\%$ to $50.3\%$), so its drop and retention rows mirror each other almost perfectly.
Figure~\ref{fig:dose-skillrl-alfworld} spans $5.2\%$ to
$88.8\%$; there the small early denominators stretch retention onto a different scale
altogether---T5 exceeds $\RP = 7$ at the base anchor---and the two rows take visibly
different shapes.

We report both regardless, for opposite reasons: the absolute drop is what a deployment
actually loses, while retention is the only one of the two that remains comparable across
checkpoints---or across settings---whose ability differs by an order of magnitude. At a fixed
anchor neither carries information the other lacks; what retention adds is comparability
along the dose axis.

\subsection{Dependence Mediates Fragility}
\label{app:mediation}

The dose curves plot fragility against the co-evolution step, but the step is only a
proxy: what should actually govern how much a fixed corruption costs is how much the
policy has come to \emph{rely} on the bank. Figure~\ref{fig:dependence-fragility} tests
this directly. For every checkpoint we place one point at $(x,y)$ with
$x$ the dependence---the drop in $\SR$ when the bank is removed entirely (the no-skill /
autonomy contrast)---and $y$ the fragility---the mean $\Drop$ over the five corruption
tiers. Reading fragility against measured dependence removes the proxy, which matters
because on \textsc{SkillRL}-WebShop the proxy fails outright: dependence there is not
monotone in the step (it reverses across an \texttt{s80}--\texttt{s110} valley), so
dependence and step are essentially uncorrelated, yet the dependence--fragility relation
still holds.

The two quantities lie on a single increasing line. Pooling the $32$ checkpoints across
both settings gives Spearman $\rho=+0.82$ ($95\%$ CI $[+0.68,+0.89]$, $p=7.4\times
10^{-9}$) and an ordinary-least-squares slope of $+0.20$ pp of fragility per pp of
dependence ($95\%$ bootstrap CI $[+0.15,+0.28]$, excluding zero); each setting is
individually significant (\textsc{SkillRL}-ALFWorld $\rho=+0.78$, $p=1.5\times10^{-4}$,
$n=18$; \textsc{SkillRL}-WebShop $\rho=+0.79$, $p=7.9\times10^{-4}$, $n=14$). The relation
is not carried by a few extreme points: the shaded band is a $4000$-sample bootstrap over
anchors and stays narrow throughout. Crucially the line passes through the third quadrant.
At the earliest checkpoints dependence is negative---removing the bank \emph{raises}
$\SR$, because the policy has not yet co-evolved to it---and there fragility is negative
too: corrupting a bank the policy does not rely on \emph{helps}. The same axis that
predicts vulnerability at high dependence predicts a benefit at negative dependence, which
is exactly the sign structure of Proposition~\ref{prop:peak}. Dependence, not elapsed
training, is what fragility tracks; the dose effect of Section~\ref{ssec:dose} is the
composition of two links, training builds dependence and dependence creates fragility.

\begin{figure}[t]
\begin{center}
\includegraphics[width=0.82\textwidth]{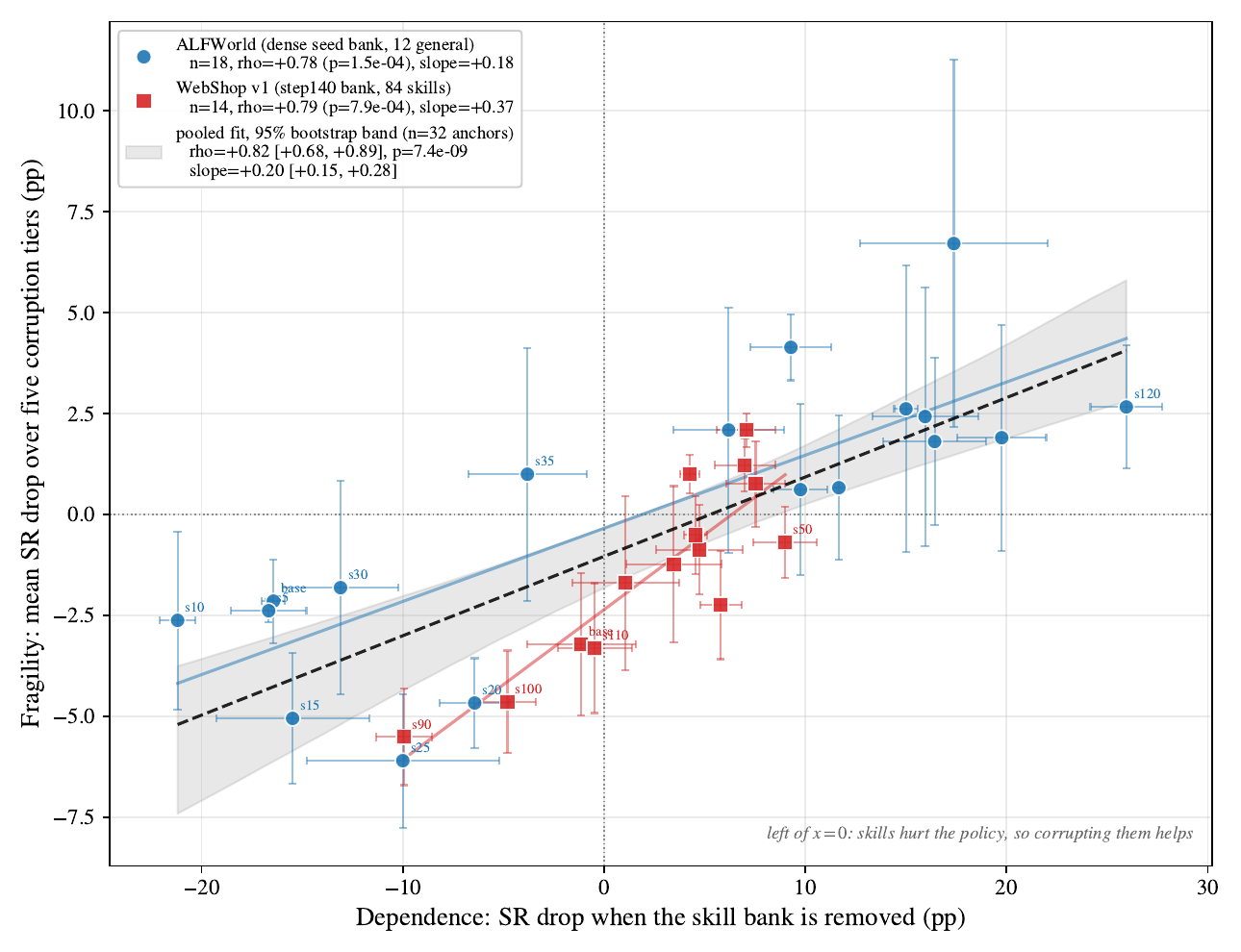}
\end{center}
\caption{Fragility scales with acquired dependence. Each point is one co-evolution
checkpoint (\textsc{SkillRL}-ALFWorld, circles; \textsc{SkillRL}-WebShop, squares);
error bars are $\pm 1$ s.d.\ of the same contrast over the three evaluation repeats.
$x$: dependence, the $\SR$ drop when the skill bank is removed. $y$: fragility, the mean
$\SR$ drop over the five corruption tiers. Solid lines are per-setting fits; the dashed
line and grey band are the pooled fit with its $95\%$ bootstrap envelope over anchors.
Points left of $x=0$ are checkpoints that have not yet co-evolved to the bank, where
removing or corrupting skills improves performance.}
\label{fig:dependence-fragility}
\end{figure}

\subsection{Surface-Form Dependence and Prompt Share}
\label{app:kappa}

\paragraph{How much of the drop is about wording?}
Comparing the two ends of the axis ordering gives a cheap diagnostic of surface-form
dependence,
\begin{equation}
\kappa \;=\; \frac{\SR(\Bankstar) - \SR(\Bankstar_{\text{T1}})}{\SR(\Bankstar) - \SR(\Bankstar_{\text{T3}})},
\label{eq:kappa}
\end{equation}
the damage done by rewriting the words relative to the damage done by rewriting the
strategy. A policy that reads a skill for its meaning should have $\kappa$ near
zero, since a synonymous rewrite costs it nothing while a wrong strategy costs it a
great deal; a policy using the bank as a look-up table has $\kappa$ near one or
above. Measured values are $\kappa = 0.78$ on \textsc{SkillRL}-ALFWorld and $2.14$ on
\textsc{ReSkill}-ScienceWorld; the latter exceeds one, meaning a meaning-preserving
change hurts more than a wrong strategy does. On the other two \textsc{SkillRL}/\textsc{ReSkill}
settings, the T3 axis does not lower the success rate below the peak, so $\kappa$ is undefined (negative
denominator). We report $\kappa$ as a diagnostic and deliberately keep it out of
Table~\ref{tab:metrics}: the metric set is closed, and a ratio of two differences is
too unstable near a small denominator to carry a pass line.

\paragraph{An observation, not a prediction.}
Injected skill text occupies an estimated 15\% to 40\% of the prompt across the four
\textsc{SkillRL}/\textsc{ReSkill} settings included in this diagnostic, and those with the
larger share are \emph{not} systematically the
ones with the higher worst-case rates. We record this because it is the first thing a
reader will ask about and the measurement is nearly free, but it forms no part of the
theory of Section~\ref{sec:theory} and we make no claim of a relationship.

\subsection{T1 Meaning-Preservation Quality Assurance}
\label{app:t1quality}

The T1 axis is defined as a meaning-preserving rewrite: the skill's wording changes but its
procedural content should not. Because the rewriting suite uses a language model rather than
a deterministic operator, we verify this property with an independent, blinded audit whose
design specifically defends against the two failure modes a reviewer would flag: the judge
being biased toward the rewriter (\emph{self-endorsement}) and the judge being insensitive
(a \emph{rubber-stamp}).

\paragraph{Protocol.}
We audit \emph{all} 190 T1 rewrites in the four \textsc{SkillRL}/\textsc{ReSkill} settings
(91 on \textsc{SkillRL}-ALFWorld,
84 on \textsc{SkillRL}-WebShop, 7 on \textsc{ReSkill}-ScienceWorld, 8 on
\textsc{ReSkill}-ALFWorld), so there is no sampling. Every pair is scored on a four-level
rubric---\emph{equivalent}, \emph{minor drift}, \emph{meaning altered},
\emph{reversed/unrelated}---by two independent judges from different providers:
\texttt{qwen3-max} (Alibaba) as the primary judge, cross-family with the rewriter (GPT-4o),
and \texttt{gpt-4o-2024-08-06} (OpenAI) as the secondary judge. The audit is blinded: pair
order is shuffled, the original is randomly placed on the A or B side, and axis labels are
kept only in an answer key. A human annotator re-checks 50\% of the T1 pairs
under the same blind protocol.

\paragraph{Judge validity: positive and negative controls.}
To confirm that the judges are neither over-eager nor over-lenient, the same blinded batch
includes 20 identity pairs (a skill against itself) and 40 negative controls (a skill
against its T2 or T3 rewrite, which by construction alter the prescribed action). Both
judges rate all 20/20 identity pairs \emph{equivalent} and all 40/40 negative controls as
non-equivalent (mean rating $\le 0.36$ vs. $1.00$ on identity), so they are not rubber
stamps and not oversensitive.

\paragraph{Results.}
Under this test both judges rate \emph{100\%} of the 190 T1 pairs as meaning-equivalent
($0/190$ altered; 95\% Wilson CI $[0.0\%, 1.9\%]$), with perfect binary agreement between
judges (Cohen's $\kappa = 1.0$) and a four-level agreement of 95.8\% (all disagreements fall
within the negative controls, over the \emph{degree} of alteration, never over its
direction). The human re-check on 95 of the 190 T1 pairs (50\%) concurs, with no meaning
changes found. In these four settings, T1 rewrites therefore preserve meaning, and the
corresponding consistency drop in Section~\ref{ssec:reportcard} measures sensitivity to
wording rather than rewrite noise.

\paragraph{Skill-bank statistics.}
Table~\ref{tab:bank-stats} summarizes the scale and growth mechanism of the skill banks for
the four \textsc{SkillRL}/\textsc{ReSkill} settings. \textsc{SkillRL} starts from a fixed set of human-authored seed
skills and accumulates new skills during training (monotone growth); \textsc{ReSkill} builds
from scratch under a fixed library cap.

\begin{table}[h]
\caption{Skill-bank configuration and size for the four \textsc{SkillRL}/\textsc{ReSkill}
settings.}
\label{tab:bank-stats}
\begin{center}
\begin{tabular}{lllcc}
\toprule
Method & Environment & Mechanism & Initial & Delivered \\
\midrule
\textsc{SkillRL} & ALFWorld     & seed + accumulate            & 55 & 91 \\
\textsc{SkillRL} & WebShop      & seed + accumulate            & 66 & 84 \\
\textsc{ReSkill} & ScienceWorld & from scratch, capped         &  0 &  7 \\
\textsc{ReSkill} & ALFWorld     & from scratch, capped         &  0 &  8 \\
\bottomrule
\end{tabular}
\end{center}
\end{table}

\subsection{The Cost in Standalone Ability}
\label{app:noskill}

If co-evolution builds dependence, the natural question is what that dependence is bought
with. Figure~\ref{fig:noskill-baseline} trains a pure-RL policy on
\textsc{SkillRL}-ALFWorld with the skill mechanism disabled throughout (arm A, no bank at
any step) and reads it against the co-evolved delivered checkpoint of the same setting,
both with the bank present ($84.1\%$, $n=5$) and with it removed ($57.6\%$, $n=5$;
step-$115$ bank, evaluated on the \texttt{eval\_in\_distribution} split at temperature
$0.4$). Two things stand out. First, the co-evolved policy stripped of its bank scores
$57.6\%$, whereas a policy that never trained with skills reaches a last-five-checkpoint
mean of $70.6\%$ and a peak of $87.5\%$: co-evolution does not merely add a removable
crutch, it leaves the underlying policy \emph{worse at acting alone} than one that was
never given the crutch. The dependence measured in
Appendix~\ref{app:mediation} is therefore not free---it is standalone ability traded
away. Second, the pure-RL run is itself unstable: it peaks at $87.5\%$ near step $125$
and then falls to $53.1\%$ by step $150$, with entropy and KL rising together over the
same window (panels B, C). Neither training regime yields a monotone capability curve;
the distinctive cost of co-evolution is the additional erosion of the skill-free policy.

The comparison is between two evaluations that share the \texttt{eval\_in\_distribution}
split and the temperature-$0.4$ sampling but differ in sample count (the training-time
validation draws $64$ tasks, the \CREPE{} reference $140$) and in the ALFWorld backend
(textual \texttt{AlfredTWEnv} during training versus the simulator-backed
\texttt{AlfredThorEnv} at evaluation); the reference lines should be read as the level of
the co-evolved policy on the shared split, not as a point comparison to two decimal
places. We report this as a supporting observation rather than a headline result.

\begin{figure}[t]
\begin{center}
\includegraphics[width=\textwidth]{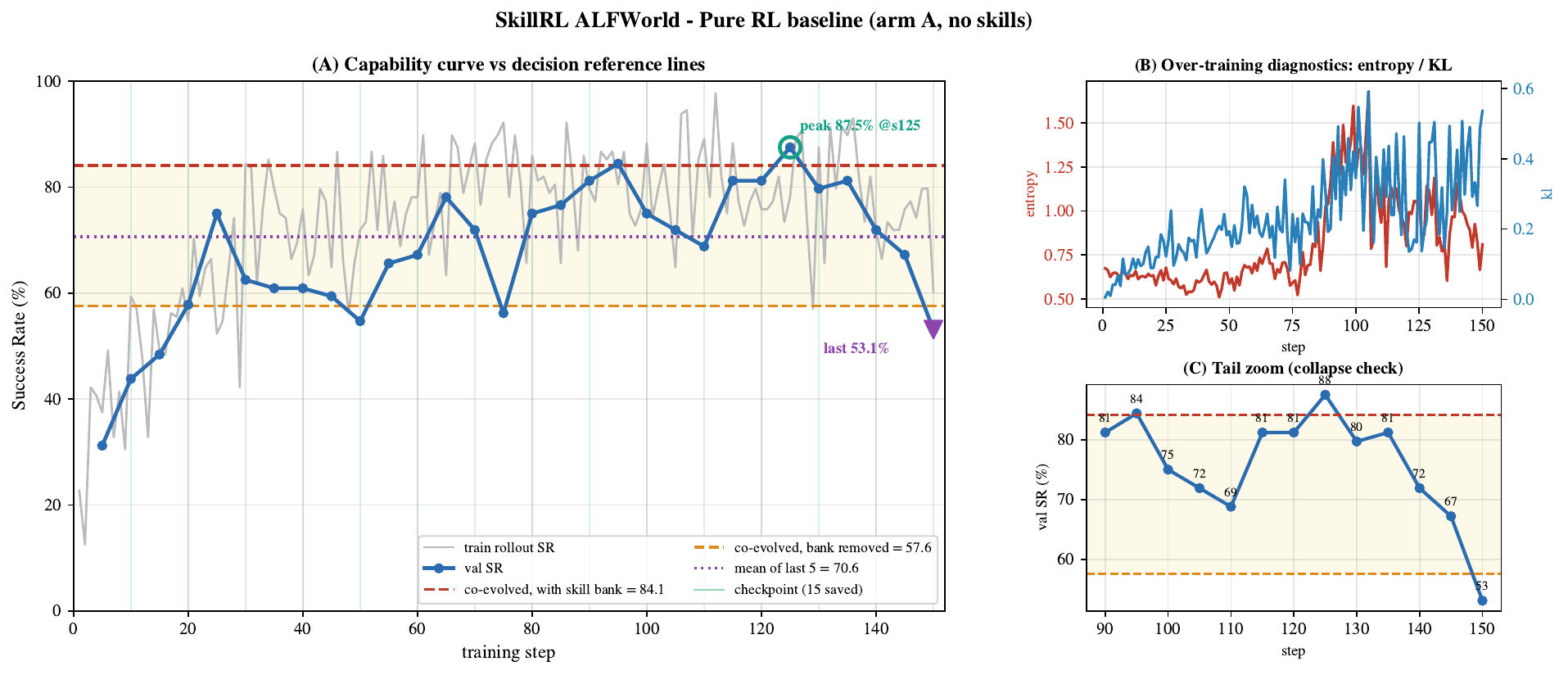}
\end{center}
\caption{Pure-RL baseline (arm A, skills disabled) on \textsc{SkillRL}-ALFWorld.
(A) Validation $\SR$ over training, against the co-evolved delivered checkpoint with the
bank present (red, $84.1\%$) and removed (orange, $57.6\%$); the co-evolved policy with
its bank removed sits below even the pure-RL last-five mean (purple dotted, $70.6\%$).
(B) Entropy and KL rise together in the second half, the signature of an over-training
collapse. (C) Tail zoom: the run falls from its step-$125$ peak to $53.1\%$ at step $150$.
Both curves and reference lines use the \texttt{eval\_in\_distribution} split at
temperature $0.4$; see the text for the difference in sample count and backend.}
\label{fig:noskill-baseline}
\end{figure}

\end{document}